\documentclass{article}

\usepackage{microtype}
\usepackage{graphicx}
\usepackage{subcaption}
\usepackage{booktabs} % for professional tables

\usepackage{hyperref}

\usepackage[linesnumbered,ruled,vlined]{algorithm2e}
\usepackage{amsmath}
\usepackage{multirow}

\usepackage[preprint]{icml2026}

\usepackage{amsmath}
\usepackage{amssymb}
\usepackage{mathtools}
\usepackage{amsthm}

\usepackage{enumitem}
\usepackage[table]{xcolor}
\usepackage{subcaption}
\usepackage{tabularx}
\usepackage{array} 
\usepackage{pifont}
\usepackage{makecell}

\usepackage[capitalize,noabbrev]{cleveref}

\theoremstyle{plain}
\newtheorem{theorem}{Theorem}[section]
\newtheorem{proposition}[theorem]{Proposition}

\newtheorem{corollary}[theorem]{Corollary}
\theoremstyle{definition}

\theoremstyle{remark}

\usepackage[textsize=tiny]{todonotes}

\icmltitlerunning{Not Blind but Silenced: Rebalancing Vision and Language via Adversarial Counter-Commonsense Equilibrium}

\begin{document}

\twocolumn[
  \icmltitle{Not Blind but Silenced: Rebalancing Vision and Language \\ via Adversarial Counter-Commonsense Equilibrium}

  % It is OKAY to include author information, even for blind submissions: the
  % style file will automatically remove it for you unless you've provided
  % the [accepted] option to the icml2026 package.

  % List of affiliations: The first argument should be a (short) identifier you
  % will use later to specify author affiliations Academic affiliations
  % should list Department, University, City, Region, Country Industry
  % affiliations should list Company, City, Region, Country

  % You can specify symbols, otherwise they are numbered in order. Ideally, you
  % should not use this facility. Affiliations will be numbered in order of
  % appearance and this is the preferred way.
  \icmlsetsymbol{equal}{*}
  \icmlsetsymbol{cor}{$\dagger$}

  \begin{icmlauthorlist}
    \icmlauthor{Qingxin Xiao}{equal,A,B}
    \icmlauthor{Peilin Zhao}{equal,C}
    \icmlauthor{Yangyang Zhao}{D}
    \icmlauthor{Lingwei Dang}{A}
    \icmlauthor{Qingyao Wu}{cor,A}

    % \icmlauthor{Firstname6 Lastname6}{sch,yyy,comp}
    % \icmlauthor{Firstname7 Lastname7}{comp}
    % %\icmlauthor{}{sch}
    % \icmlauthor{Firstname8 Lastname8}{sch}
    % \icmlauthor{Firstname8 Lastname8}{yyy,comp}
    %\icmlauthor{}{sch}
    %\icmlauthor{}{sch}
  \end{icmlauthorlist}

  \icmlaffiliation{A}{South China University of Technology}
  \icmlaffiliation{B}{Institute for Super Robotics (Huangpu)}
    \icmlaffiliation{C}{Shanghai Jiao Tong University}
   \icmlaffiliation{D}{Changsha University of Science and Technology}

  \icmlcorrespondingauthor{Qingyao Wu}{qyw@scut.edu.cn}
  
  % \icmlcorrespondingauthor{Firstname2 Lastname2}{first2.last2@www.uk}

  % You may provide any keywords that you find helpful for describing your
  % paper; these are used to populate the "keywords" metadata in the PDF but
  % will not be shown in the document
  \icmlkeywords{Machine Learning, ICML}

  \vskip 0.3in
]

% this must go after the closing bracket ] following \twocolumn[ ...

% This command actually creates the footnote in the first column listing the
% affiliations and the copyright notice. The command takes one argument, which
% is text to display at the start of the footnote. The \icmlEqualContribution
% command is standard text for equal contribution. Remove it (just {}) if you
% do not need this facility.

% Use ONE of the following lines. DO NOT remove the command.
% If you have no special notice, KEEP empty braces:
\printAffiliationsAndNotice{*Equal contribution. 
    $^\dagger$ Corresponding author.}  % no special notice (required even if empty)
% Or, if applicable, use the standard equal contribution text:
% \printAffiliationsAndNotice{\icmlEqualContribution}

\begin{abstract}
 % During the decoding process of Multimodal Large Language Models (MLLMs), attention often abnormally concentrates on specific irrelevant image tokens. Existing research treats this phenomenon as invalid noise and forcibly corrects the attention distribution to compel the model to focus on key image information. However, this overlooks the fact that high-attention tokens essentially serve as critical carriers of visual information and generative narrative logic; consequently, such correction exacerbates the visual-language imbalance during the decoding phase. Addressing this issue, we adopt a "decoding-as-game" perspective to reveal that hallucinations are essentially the product of an equilibrium imbalance between language priors and visual information. We propose a training-free Adversarial Counter-Commonsense Equilibrium (ACE) framework. ACE perturbs the visual context via counter-commonsense adversarial patches. By leveraging the characteristic that authentic visual features remain stable under perturbation while hallucinated generations fluctuate with visual context changes, we construct a dynamic game decoding strategy. This strategy precisely suppresses perturbation-sensitive language priors while simultaneously compensating for stable visual information, thereby achieving visual-language balance. Extensive experiments demonstrate that ACE, as a plug-and-play strategy, introduces negligible inference overhead and effectively enhances model trustworthiness. 

 During MLLM decoding, attention often abnormally concentrates on irrelevant image tokens. While existing research dismisses this as invalid noise and forcibly redirects attention to compel focusing on key image information, we argue these tokens are critical carriers of visual and narrative logic, and such coercive corrections exacerbate visual-language imbalance. Adopting a "decoding-as-game" perspective, we reveal that hallucinations stem from an equilibrium imbalance between linguistic priors and visual information. We propose Adversarial Counter-Commonsense Equilibrium (ACE), a training-free framework that perturbs visual context via counter-commonsense patches. Leveraging the fact that authentic visual features remain stable under perturbation while hallucinations fluctuate, ACE implements a dynamic game decoding strategy. This approach precisely suppresses perturbation-sensitive priors while compensating for stable visual signals to restore balance. Extensive experiments demonstrate that ACE, as a plug-and-play strategy, enhances model trustworthiness with negligible inference overhead.
 
 % Project page: \url{https://xqxwill.github.io/ACE/}.
\end{abstract}

\begin{figure}[t]
  
  \vskip 0.2in
  \begin{center}
    % width=\textwidth 适配双栏总宽度（替代原单栏的 \columnwidth）
    % \centerline{\includegraphics[width=\textwidth]{icml_numpapers}}
    \includegraphics[width=0.9\columnwidth]{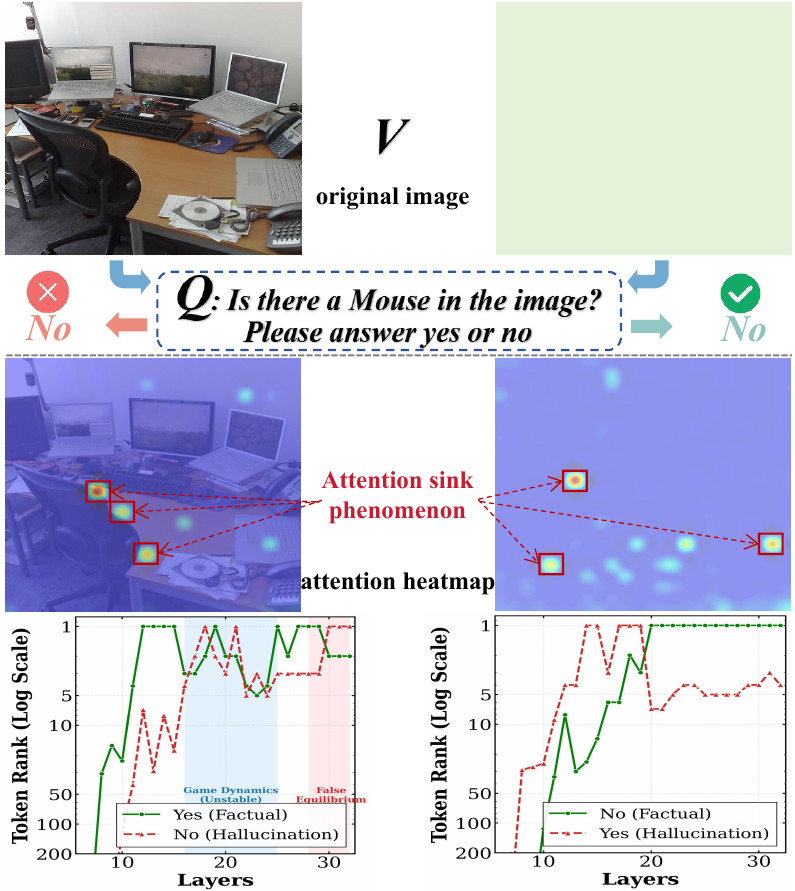} 
    \caption{We use a solid-color control to isolate the nature of Attention Sinks. Their persistence without visual content identifies them as structural artifacts independent of semantics, termed the \textbf{VSB}. Logit Lens analysis reveals a distinct contrast: the VSB remains stationary in control settings but fluctuates markedly in normal imagery. This implies visual signals are structural suppressed rather than discarded, reflecting a competition between visual and linguistic priors. \textbf{Appendix~\ref{app:logit_dynamics}} provides detailed empirical validation of this non-cooperative game mechanism.}

    \label{fig1}
  \end{center}
  \vskip -0.2in 
\end{figure}

\section{Introduction}

In recent years, Multimodal Large Language Models (MLLMs) have demonstrated remarkable advancements in comprehending complex visual scenarios and generating coherent, contextually relevant descriptions for visual inputs \cite{lu2024deepseek, lai2024lisa, geng2024instructdiffusion}. Despite the robust versatility endowed by their superior performance across diverse visual tasks, these models are universally plagued by the challenge of "hallucination" \cite{bai2024hallucination, guan2024hallusionbench}. Specifically, this refers to instances where the generated text, while semantically fluent and coherent, contains erroneous content that contradicts visual facts. Common manifestations include identifying non-existent objects, omitting existing targets, or misrepresenting attributes such as position, color, and quantity \cite{liu2024survey}. This phenomenon not only severely undermines the trustworthiness of MLLMs but also significantly impedes their broader deployment in real-world scenarios \cite{chen2024towards}.

In investigating the origins of MLLM hallucinations, existing research predominantly attributes them to abnormal attention collapse (Attention Sinks) during the decoding phase \cite{park2025second, woo2025don}(as shown in fig. ~\ref{fig1}). These studies observe that the model's focus excessively concentrates on irrelevant tokens, such as image backgrounds \cite{zhao2024mitigating, li2025hidden, zoulook, pmlr-v267-luo25b, che2025hallucinatory, li2025mitigating}, and attempt to alter the attention distribution through methods like forced masking to compel the model to attend to specific image regions \cite{park2025second, li2025mitigating, cho2025revisitseediscloselanguage}. However, this perspective confounds the causality of hallucination generation, as anomalies in attention distribution are often merely manifestations of decoding biases rather than their root cause \cite{min2025mitigating, ICLR2025_109cf25c}. We propose a core hypothesis: the essence of hallucination is a structural imbalance resulting from an asymmetric game between parametric language priors and non-parametric visual information during the decoding process. Our research reveals that the visual flow does not decay with layer depth but persists throughout. However, in the effort to maintain generation coherence, the model tends to sacrifice high-entropy visual information by implicitly suppressing and temporarily storing it within the Sink regions, thereby collapsing onto low-entropy semantic prior paths. Consequently, the model is effectively rendered "silenced" rather than "blind".

Consequently, Attention Sinks should not be regarded merely as triggers for hallucination or meaningless noise. On the contrary, we propose that they are \textbf{Visual Semantic Buffers (VSB)} inevitably derived by the autoregressive decoding mechanism to maintain contextual narrative logic. Essentially, VSBs serve as information buffers established by the model when processing high-dimensional visual signal flows. When an imbalance occurs between visual information and language priors, the model does not discard authentic visual features; instead, it temporarily stores them within these high-attention VSBs to preserve implicit contextual narrative coherence. This implies that VSBs contain a substantial amount of non-explicitly decoded authentic visual information and contextual semantic information. Crucially, our experiments demonstrate that the emergence of attention anchors is an inherent requirement of autoregressive generation; these VSBs persist throughout the decoding process and exhibit significant instability. Blindly suppressing or masking current VSBs fails to eliminate this mechanism. Instead, it compels the model to dynamically migrate attention to new, unpredictable locations, triggering the stochastic emergence of secondary VSBs. Therefore, in the absence of effective information migration or decoupling mechanisms, blindly suppressing or re-weighting VSBs effectively disrupts the model's visual caching and attention homeostasis. This not only fails to decouple hallucinations but exacerbates them.

To address this, we propose \textbf{Adversarial Counter-Commonsense Equilibrium (ACE)}. We define the standard inference as the \textbf{Original Input Stream (OIS)}. By introducing counter-commonsense adversarial patches into non-subject regions, we construct a \textbf{Counter-commonsense Interference Stream (CIS)}. Leveraging the insight that authentic features remain robust under interference while hallucinations drift, we filter unstable priors to construct a \textbf{Decoupled Visual Stream (DVS)}. We inject the DVS into the active mid-layer hidden states to calibrate the trajectory and implement a dynamic game decoding at the final layer. This strategy precisely penalizes entrenched priors and rewards stable visual features, achieving a dynamic visual-language equilibrium.

Notably, ACE is a training-free, plug-and-play strategy requiring no external networks. Our contributions are threefold:
\begin{itemize}[noitemsep, topsep=0pt, parsep=0pt, leftmargin=*]
    \item We unveil that hallucination results from an imbalance in the dynamic game between language priors and visual information within VSBs.
    \item We propose the ACE framework, employing counter-commonsense perturbations and DVS to restore equilibrium through dynamic game decoding.
    \item We validate ACE's effectiveness in enhancing trustworthiness across complex scenarios on mainstream benchmarks, including POPE \cite{li-etal-2023-evaluating}, CHAIR \cite{rohrbach-etal-2018-object}, and MME \cite{fu2025mme}.
\end{itemize}

% --- Main Text: Related Works ---

\begin{figure*}[ht]
  \vskip 0.2in
  \begin{center}
    % width=\textwidth 适配双栏总宽度（替代原单栏的 \columnwidth）
    % \centerline{\includegraphics[width=\textwidth]{icml_numpapers}}
    \includegraphics[width=1\textwidth]{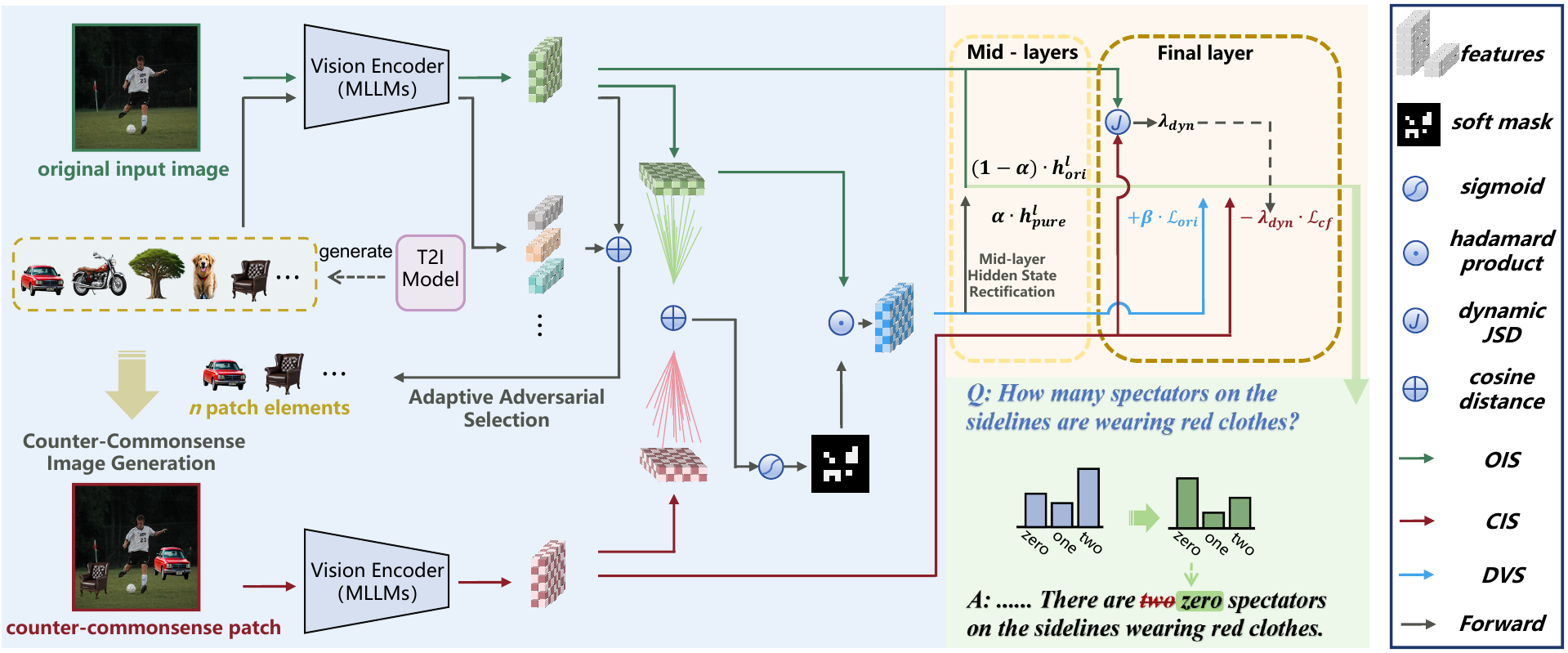} 
    \caption{\textbf{Overview of the ACE framework.} 
Starting with the \textbf{Adaptive Adversarial Selection}, ACE retrieves and \textbf{pastes counter-commonsense patches} (e.g., pasting a ``car'' into a ``soccer field'') to construct the CIS. 
By contrasting the CIS with the OIS via cosine similarity in the deep feature space, ACE employs a soft gating mechanism to decouple environment-invariant features, forming the DVS. 
Finally, by implementing hierarchical synergistic interventions at the mid-layers (hidden state rectification) and the final layer (dynamic game), where ACE dynamically penalizes perturbation-insensitive hallucinations while explicitly rewarding visually faithful features, the framework effectively restores the visual-language equilibrium. 
See Sec. 4 for more details.}

    \label{fig2}
  \end{center}
  \vskip -0.2in % 可选：微调图片与下文的间距
\end{figure*}

\section{Related Works}
\label{sec:related_works}

\textbf{Inference-time Decoding Strategies.} 
Inference-time decoding strategies operate without parameter updates, suppressing hallucinations by solely intervening in logits or attention flows, thereby garnering significant attention for their efficiency and flexibility. A dominant paradigm is Contrastive Decoding, which calibrates model outputs by introducing a ``negative'' distribution. Methods such as VCD~\cite{leng2024mitigating}, ICD~\cite{kim2024instructive, wang-etal-2024-mitigating}, and OPERA~\cite{huang2024opera} penalize tokens that receive high scores in hallucination-prone contexts; meanwhile, DOLA~\cite{ICLR2024_edc36117} and TCD~\cite{back2025watermarking} leverage layer-wise discrepancies to eliminate linguistic priors. 
To investigate the intrinsic roots of hallucination, research has increasingly delved into the internal microscopic attention mechanisms of MLLMs. Microscopic attention correction focuses on the ``Attention Sink'' phenomenon, where attention collapses onto irrelevant regions such as backgrounds. This phenomenon is identified as the root cause of the model's failure to perceive the image accurately and its subsequent degeneration into prior-based completion~\cite{zhao2024mitigating, li2025hidden, zoulook, pmlr-v267-luo25b, che2025hallucinatory}. Approaches like IMCCD~\cite{li2025mitigating}, ReVisiT~\cite{cho2025revisitseediscloselanguage}, SECOND~\cite{park2025second}, Pai~\cite{liu2024paying}, AVISC~\cite{woo2025don}, and SID~\cite{ICLR2025_3cc87f2b} employ masking or re-weighting strategies to forcibly redirect the model's attentional focus.

\textbf{Gap and Our Approach.} 
However, existing methods predominantly regard Attention Sinks as adversarial noise to be suppressed \cite{jiang2025visiontransformersdontneed, ICLR2024_0b408293}, overlooking their structural role as \textbf{Visual Semantic Buffers (VSBs)} holding implicit evidence. To prevent information loss, we propose ACE, which leverages feature stability under counter-commonsense perturbations to \textit{decouple} rather than \textit{suppress} these buffers, thereby achieving robust hallucination mitigation while preserving attention homeostasis. (See Appendix~\ref{app:background} for comprehensive background details).

\section{Preliminaries: Problem Formulation}
\label{sec:preliminaries}

In this section, we reframe the decoding process of MLLMs. We move beyond the standard autoregressive definition to establish a physics-grounded "Game Theory" perspective. Here, ``physics-grounded'' specifically refers to the dominance of magnitude-driven dynamics (e.g., norm inflation and saturation effects) within the Transformer attention mechanism, rather than classical physical systems. This formulation reveals the underlying mechanism of hallucinations and sets the theoretical stage for our proposed intervention. To provide empirical motivation for this perspective, we conducted a preliminary exploratory analysis, as visualized in Figure~\ref{fig1}. By tracking the logit ranking evolution of factual versus hallucinated tokens, we observe a distinct pattern disparity: the solid-color control maintains a ``stationary stability,'' whereas the normal imagery exhibits significant ``oscillations'' in intermediate layers. This observation suggests that the decoding process may involve a dynamic competition between visual information and linguistic priors, rather than a simple linear decay. This serves as an intuitive basis for the formal game-theoretic modeling presented below. (Rigorous mathematical derivations detailed in Appendix~\ref{app:game_theory}).

% \subsection{The Physical Basis: Visual Semantic Buffer (VSB)}
% Standard hypotheses often assume that visual information undergoes "monotonically linear decay" in deep Transformer layers. However, our theoretical analysis (proofs provided in Appendix~\ref{app:vsb_theory}) refutes this view. We demonstrate that driven by the norm inflation of register neurons, the context vector $\mathbf{c}_t$ does not lose visual information but compresses it into a high-norm structure, which we define as the VSB:
% \begin{equation}
%     \mathbf{c}_t \approx \mathcal{V}_{global} + \mathbf{n}_{bias}.
% \end{equation}
% Here, $\mathcal{V}_{global}$ represents the authentic visual semantics effectively compressed within the buffer. In contrast, $\mathbf{n}_{bias}$ denotes the high-norm Linguistic and Contextual Inertia. This formulation proves that hallucinations arise not because the model is "blind," but because the authentic signal $\mathcal{V}_{global}$ is structurally silenced by the dominant magnitude of $\mathbf{n}_{bias}$. To intuitively validate this, we analyze the topological reorganization of attention heatmaps in Appendix~\ref{app:visualization}, confirming that the VSB is a physically responsive carrier rather than a static sink. While the visual signal $\mathcal{V}_{global}$ is latent, it is structurally suppressed by the cumulative bias $n_{bias}$ across layers. Our mid-layer rectification serves as a representational catalyst, preventing the irreversible collapse of visual evidence into linguistic priors.

\subsection{The Physical Basis: Visual Semantic Buffer (VSB)}
Contrary to the assumption of "monotonically linear decay" in deep Transformers, our analysis (Appendix~\ref{app:vsb_theory}) suggests that visual information is not lost but compressed into a high-norm structure—the VSB—driven by the norm inflation of register neurons:
\begin{equation}
    \mathbf{c}_t \approx \mathcal{V}_{global} + \mathbf{n}_{bias}.
\end{equation}
Here, $\mathcal{V}_{global}$ captures the latent visual semantics, while $\mathbf{n}_{bias}$ represents the dominant linguistic and contextual inertia. This formulation reveals that hallucinations stem from the structural silencing of $\mathcal{V}_{global}$ by the high-magnitude $\mathbf{n}_{bias}$, rather than a lack of visual perception. Attention heatmap visualizations (Appendix~\ref{app:visualization}) further confirm the VSB as a responsive carrier rather than a static sink. Since $\mathcal{V}_{global}$ is progressively suppressed by cumulative bias, our mid-layer rectification acts as a representational catalyst, preventing the irreversible collapse of visual information into linguistic priors.

\subsection{Decoding as a Non-cooperative Game}
Building upon the VSB structure, the decoding process can be physically modeled as a dynamic non-cooperative game between two virtual agents competing for the generation right of the next token $y_t$:
\begin{itemize}
    \item \textbf{Agent V (The Visual Agent):} Strives to maximize the likelihood of tokens grounded in visual facts $P(y_t \mid \mathcal{V}_{global})$.
    \item \textbf{Agent L (The Language Agent):} Strives to maximize the likelihood of tokens driven by narrative priors $P(y_t \mid \mathbf{n}_{bias})$.
\end{itemize}

\paragraph{Motivation for ACE.}
In standard decoding, the accumulated inertia of Agent L often overwhelms Agent V, forcing the system into a False Equilibrium where $P(y_{hallucination}) > P(y_{fact})$. To resolve this structural imbalance, we cannot rely on the model's self-correction. Instead, we must introduce an external intervention mechanism to reshape the payoff matrix of this game. This necessitates the proposal of our ACE framework.

\begin{table*}[t]
\centering
\caption{\textbf{Evaluation results on the POPE benchmark across four MLLMs.} We report the Accuracy and F1 scores computed across adversarial, popular, and random splits. The best results are highlighted in \textbf{bold}.}
\label{tab1}
\begin{tabular}{l c p{1mm} c c p{0.5mm} c c p{1mm} c c p{1mm} c c}
\hline
Setting & Method & & \multicolumn{2}{c}{Shikra} & & \multicolumn{2}{c}{LLaVA-1.5} & & \multicolumn{2}{c}{InstructBLIP} & & \multicolumn{2}{c}{LLaVA-NeXT} \\
\cline{4-5} \cline{7-8} \cline{10-11} \cline{13-14}
      & POPE & & Acc$\uparrow$ & F1$\uparrow$ & & Acc$\uparrow$ & F1$\uparrow$ & & Acc$\uparrow$ & F1$\uparrow$ & & Acc$\uparrow$ & F1$\uparrow$ \\
\hline

% ===================== Group 1 =====================
\multirow{5}{*}{Adversarial}
 & Greedy      & & 78.25 & 79.42 & & 80.13 & 80.96 & & 75.03 & 76.84 & & 80.12 & 80.53 \\
 & VCD         & & 78.97 & 79.69 & & 80.15 & 81.04 & & 76.87 & 78.36 & & 81.09 & 81.03 \\
 & OPERA       & & 78.85 & 79.72 & & 80.98 & 81.35 & & 75.12 & 76.71 & & 81.94 & 81.89 \\
 & SID         & & 78.83 & 79.36 & & 83.05 & 82.32 & & \textbf{78.65} & 79.63 & & 84.08 & 82.92 \\
\rowcolor[HTML]{C0C0C0}
\cellcolor{white} & \textbf{ACE(ours)} & & \textbf{79.54} & \textbf{79.98} & & \textbf{84.65} & \textbf{82.59} & & 78.39 & \textbf{79.64} & & \textbf{85.68} & \textbf{82.89} \\ 
\hline

% ===================== Group 2 =====================
\multirow{5}{*}{Popular}
 & Greedy      & & 82.29 & 82.97 & & 83.42 & 84.17 & & 80.14 & 75.66 & & 85.48 & 86.17 \\
 & VCD         & & 81.48 & 82.34 & & 85.73 & \textbf{85.77} & & 79.32 & 79.64 & & 83.11 & 84.23 \\
 & OPERA       & & 82.60 & \textbf{83.19} & & 84.65 & 84.60 & & 79.36 & \textbf{80.45} & & 83.96 & 84.88 \\
 & SID         & & 82.47 & 81.65 & & 84.73 & 85.14 & & 80.07 & 77.40 & & 86.32 & 86.09 \\
\rowcolor[HTML]{C0C0C0}
\cellcolor{white} & \textbf{ACE(ours)} & & \textbf{83.29} & 83.01 & & \textbf{85.96} & 85.38 & &\textbf{81.84} & 79.61 & & \textbf{88.83} & \textbf{88.32} \\
\hline

% ===================== Group 3 =====================
\multirow{5}{*}{Random}
 & Greedy      & & 84.03 & 84.24 & &89.62 &89.14 & &84.94 &84.33 & &89.74 &89.03 \\
 & VCD         & & 84.72 & 84.60 & &89.55 &\textbf{89.31} & &84.82 & 83.79 & &88.60 &87.89 \\
 & OPERA       & & 84.25 & 84.68 & &88.64 &88.43 & &84.86 &84.12 & &90.17 &89.74 \\
 & SID         & & 84.50 & 84.74 & &89.09 &88.83 & &86.34 &86.83 & &89.96 &90.10 \\
\rowcolor[HTML]{C0C0C0}
\cellcolor{white} & \textbf{ACE(ours)} & &\textbf{84.93} &\textbf{85.21} & &\textbf{90.87} &88.93 & &\textbf{86.96} &\textbf{86.98} & &\textbf{91.28} &\textbf{90.63} \\
\hline

\end{tabular}
\vskip -0.1in
\end{table*}

\section{Method}
\label{sec:method}

Drawing upon the game-theoretic problem formulation in Section~\ref{sec:preliminaries}, we propose the ACE framework. As illustrated in Fig.~\ref{fig2}, ACE departs from the traditional single-stream paradigm by constructing a synergistic three-stream architecture: the \textbf{Original Input Stream (OIS)}, the \textbf{Counter-commonsense Interference Stream (CIS)}, and the \textbf{Decoupled Visual Stream (DVS)}.

Through the dynamic interaction of these streams, ACE deeply decouples the entangled representations within the VSB, separating visual facts from linguistic priors to restore a factual equilibrium.

\subsection{Negative Probe: CIS (Penalizing Agent L)}
To identify and penalize hallucinations driven by Narrative Inertia (Agent L), ACE constructs the CIS. The core logic relies on the \textbf{Sensitivity Disparity} between the two agents. Tokens driven purely by linguistic priors exhibit "irrational insensitivity" to drastic visual perturbations, whereas visual-grounded tokens are highly sensitive to such changes.

\paragraph{Offline Adversarial Library Construction.}
To ensure perturbations induce maximal semantic conflict, we pre-construct an Adversarial Object Library $\mathcal{B}$ using generative models. This library contains $K$ diverse objects (such as \textit{polar bears, spaceships, volcanoes}) spanning mutually exclusive categories. We pre-cache their feature vectors to enable efficient retrieval during inference. (See Appendix~\ref{app:library_construction} for details).

\paragraph{Adaptive Adversarial Selection.}
For an input image $I_{raw}$, we employ a "farthest neighbor selection" strategy within the shared feature manifold to identify a disturbance source $p^* \in \mathcal{B}$ that is semantically orthogonal to the global image feature $\mathbf{v}_{raw}$:
\begin{equation}
    p^* = \arg\min_{p_k \in \mathcal{B}} \text{CosSim}(\mathbf{v}_{raw}, \mathbf{v}_k).
\end{equation}
This ensures that the adversarial probe resides at the semantic antipode of the current context, maximizing the potential representational displacement. For example, the system may select a "fire truck" to perturb an "underwater" scene. Such semantic orthogonality guarantees the potency of the interference: a low JSD reliably indicates a lack of visual grounding, as even a maximally distant stimulus fails to displace the stubborn linguistic prior, thereby exposing the hallucination.

\paragraph{Counter-Commonsense Image Generation.}
We generate the Counter-Commonsense image $I_{cf}$ by pasting $p^*$ onto the non-salient background of $I_{raw}$. To protect the main subject, we utilize a morphological denoising strategy to extract a background mask $\mathcal{M}_{noise}$:
\begin{equation}
    I_{cf} = I_{raw} \cdot (1 - \mathcal{M}_{noise}) + \sum_{j=1}^{N} (p^* \cdot M_{patch}^{(j)}).
\end{equation}
We constrain $M_{patch}^{(j)}$ to be binary and non-overlapping within $\mathcal{M}_{noise}$ to preserve local pixel intensity. The resulting feature stream serves as a "Negative Probe" for causal inference: if the model's output distribution remains indifferent to this drastic and semantically discordant visual change, it provides strong evidence that the generation is a hallucination blindly dominated by Agent L.

\subsection{Positive Anchor: DVS (Rewarding Agent V)}
Complementary to the penalization mechanism of CIS, ACE constructs the DVS to ``reward loyalty.'' This module is founded on the Semantic Rigidity Hypothesis. In the deep feature space, authentic visual objects (Agent V) maintain their feature direction under background perturbations, whereas context noise drifts significantly.

We input $I_{raw}$ and $I_{cf}$ into the visual encoder to obtain feature sequences $\mathbf{F}_{raw}$ and $\mathbf{F}_{cf}$. Note that the token-wise comparison is geometrically valid, as both inputs share identical positional embeddings and architectural constraints. We quantify the semantic rigidity of each token $i$ via point-wise cosine similarity in the normalized space:
\begin{equation}
    S^{(i)} = \frac{\mathbf{F}_{raw}^{(i)} \cdot \mathbf{F}_{cf}^{(i)}}{\|\mathbf{F}_{raw}^{(i)}\|_2 \|\mathbf{F}_{cf}^{(i)}\|_2}.
\end{equation}
To translate this similarity into a differentiable decoupling mask $\mathbf{M}$, we apply a soft gating mechanism controlled by a temperature coefficient $\kappa$:
\begin{equation}
    \mathbf{M}^{(i)} = \sigma \left( \kappa \cdot (S^{(i)} - \tau) \right).
\end{equation}
Finally, we perform physical isolation via the Hadamard product to obtain the high-purity visual stream: $\mathbf{V}_{iso} = \mathbf{F}_{raw} \odot \mathbf{M}$
% \begin{equation}
%     \mathbf{V}_{iso} = \mathbf{F}_{raw} \odot \mathbf{M}
% \end{equation}
$\mathbf{V}_{iso}$ strips away environmental dependencies and retains only robust visual facts. This serves as a reliable ``Positive Anchor'' for Agent V in the subsequent game.

\subsection{Hierarchical Synergistic Intervention}
ACE enforces the decoupling mechanism throughout the decoding lifecycle by intervening in both the hidden state space (mid-layers) and the probability space (final layer).

\paragraph{Mid-layer Hidden State Rectification.}
To prevent Agent L from suppressing visual signals early in the network, we inject the purified visual state $\mathbf{h}_{iso}^{(l)}$ (derived from $\mathbf{V}_{iso}$) into the original stream $\mathbf{h}_{raw}^{(l)}$ at intermediate layers. Using an injection ratio $\alpha$, the update rule is:
\begin{equation}
    \mathbf{h}_{raw}^{(l)} \leftarrow (1 - \alpha) \cdot \mathbf{h}_{raw}^{(l)} + \alpha \cdot \mathbf{h}_{iso}^{(l)}.
\end{equation}
This operation physically reinforces the weight of $\mathcal{V}_{global}$ within the VSB, calibrating the inference trajectory before the final decision phase.

\paragraph{Final-layer Dynamic Game Decoding.}
At the critical logits generation phase, ACE constructs the dynamic game field. Based on the equilibrium theorem derived in Appendix Theorem~\ref{thm:ace_equilibrium}, the final decoding logits $\mathcal{L}_{final}$ are formulated as a modified total utility function:
\begin{equation}
    \mathcal{L}_{final} = \mathcal{L}_{raw} - \lambda_{dyn}(t) \cdot \mathcal{L}_{cf} + \beta \cdot \mathcal{L}_{iso}.
\end{equation}
This formula applies two critical game-theoretic corrections to the standard generation process:

\begin{enumerate}[noitemsep, topsep=0pt, parsep=0pt, leftmargin=*]
    \item \textbf{Dynamic Penalty Term ($\lambda_{dyn}$):} We use the Jensen-Shannon Divergence (JSD) between the distributions of OIS ($P_{raw}$) and CIS ($P_{cf}$) to quantify the model's sensitivity. The adaptive penalty coefficient is defined as:
    \begin{equation}
        \lambda_{dyn}(t) = \lambda_{base} \cdot \exp\left( -\gamma \cdot \text{JSD}(P_{raw} || P_{cf}) \right).
    \end{equation}
    When JSD is low, it indicates that Agent L is dominating and the model is insensitive to visual changes. Consequently, $\lambda_{dyn}$ surges exponentially, imposing a heavy penalty on the hallucinated tokens in $\mathcal{L}_{cf}$.

    \item \textbf{Static Reward Term ($\beta$):} The term $+\beta \cdot \mathcal{L}_{iso}$ explicitly rewards tokens that align with the rigid visual subspace $\mathbf{V}_{iso}$. This numerically compensates for the payoff of Agent V, ensuring that factual visual details are prioritized.
\end{enumerate}

By sampling the next token $y_t$ from this re-balanced $\mathcal{L}_{final}$, ACE forces the system to shift from a False Equilibrium to a Factual Equilibrium, where the generation is strictly aligned with visual information. For implementation details, please refer to the pseudocode provided in Appendix~\ref{alg:ace_framework}. 

We empirically validate our physical propositions in Appendix E, demonstrating VSB instability through attention heatmap volatility and mapping the stream decoupling of DVS and CIS via color-coded generation trajectories.

\begin{table*}[t]
\centering
\caption{\textbf{Evaluation results on the CHAIR benchmark across four MLLMs.} We report sentence-level ($C_S$) and instance-level ($C_I$) scores, where lower values indicate better performance. The best results are highlighted in \textbf{bold}.}
\label{tab2}
\begin{tabular}{clcclcclcclcc}
\hline
Method             &  & \multicolumn{2}{c}{Shikra}    &  & \multicolumn{2}{c}{LLaVA-1.5} &  & \multicolumn{2}{c}{InstructBLIP} &  & \multicolumn{2}{c}{LLaVA-NeXT} \\ \cline{1-1} \cline{3-4} \cline{6-7} \cline{9-10} \cline{12-13} 
CHAIR              &  & $C_S$$\downarrow$        & $C_I$$\downarrow$        &  & $C_S$$\downarrow$         & $C_I$$\downarrow$      &  & $C_S$$\downarrow$          & $C_I$$\downarrow$         &  & $C_S$$\downarrow$         & $C_I$$\downarrow$        \\ \hline
Greedy             &  & 46.8          & 13.8          &  & 48.3           & 14.1         &  & 50.9            & 13.2           &  & 40.7           & 12.1          \\
VCD                &  & 47.1          & 13.6          &  & 45.5           & 12.8         &  & 43.2            & 13.9           &  & 38.3           & 11.4          \\
OPERA              &  & 45.1          & 12.7          &  & 44.1           & 12.2         &  & 45.5            & 12.3           &  & 37.6           & 10.2          \\
SID                &  & 43.9          & 12.5          &  & 44.5           & 12.7         &  & 43.5            & 12.1           &  & 37.9           & 11.5          \\
\rowcolor[HTML]{C0C0C0} 
\textbf{ACE(ours)} &  & \textbf{41.1} & \textbf{11.4} &  & \textbf{40.4}  & \textbf{9.6} &  & \textbf{41.7}   & \textbf{10.8}  &  & \textbf{35.4}  & \textbf{10.6} \\ \hline
\end{tabular}
\vskip -0.1in
\end{table*}

% Please add the following required packages to your document preamble:
% \usepackage[table,xcdraw]{xcolor}
% Beamer presentation requires \usepackage{colortbl} instead of \usepackage[table,xcdraw]{xcolor}

\section{Experiments}

This section systematically evaluates the hallucination mitigation capability of ACE across multiple MLLMs. The results show that ACE achieves overall competitive performance on mainstream evaluation metrics. Moreover, as a training-free test-time method, ACE exhibits strong generality and scalability, making it well suited for practical deployment scenarios.

\subsection{Experiment Setup}

\textbf{Models.} To verify the broad applicability of the proposed method across different MLLMs architectures, we apply ACE to several widely used models and conduct systematic evaluations, including \textbf{Shikra} \cite{chen2023shikra}, \textbf{LLaVA-1.5(7B)} \cite{liu2024improved}, \textbf{LLaVA-NeXT(8B)} \cite{li2024llava}, and \textbf{InstructBLIP} \cite{dai2023instructblip}. In addition, we employ Hyper-SDXL \cite{ren2024hyper} to generate counter-commonsense patches, which are used to introduce counter-commonsense perturbations during the decoding process.

\textbf{Evaluations.} We evaluate hallucination mitigation using three standard benchmarks: \textbf{POPE} \cite{li-etal-2023-evaluating} for object existence verification across three splits, \textbf{CHAIR} \cite{rohrbach-etal-2018-object} for captioning fidelity ($\text{CHAIR}_S$, $\text{CHAIR}_I$), and the \textbf{MME} \cite{fu2025mme} hallucination subset for fine-grained attribute consistency (see Appendix~\ref{app:metric_details} for full definitions).

\begin{table}[t]
\centering
\caption{\textbf{Inference latency comparison.} We evaluate the decoding speed (ms/token) on LLaVA-1.5. The values in parentheses denote the relative latency normalized to Greedy decoding. }
\label{tab3}
\begin{tabularx}{0.95\columnwidth}{>{\centering\arraybackslash}X >{\centering\arraybackslash}c} 
\hline
\textbf{Method} & \textbf{Inference Latency (ms/token)} \\ \hline
Greedy      & 19.8 (×1.0)       \\
VCD         & 46.4 (×2.34)      \\
OPERA       & 89.5 (×4.92)      \\
SID         & 78.6 (×3.97)      \\
\textbf{ACE (ours)}  & \textbf{44.4 (×2.24)}      \\ \hline
\end{tabularx}

\vskip -0.1in % 可选：微调图片与下文的间距
\end{table}

\textbf{Baseline. }In addition to comparing with the original MLLM sampling strategy, we include several representative hallucination mitigation methods as baselines:
\begin{itemize}[noitemsep, topsep=0pt, parsep=0pt, leftmargin=*]
 \item \textbf{Greedy Decoding:} Selects the token with the highest posterior probability at each decoding step.
    \item \textbf{VCD} \cite{leng2024mitigating}: Mitigates hallucinations by performing contrastive decoding against randomly perturbed image inputs.
    \item \textbf{OPERA} \cite{huang2024opera}: Penalizes logits to alleviate over-confidence during beam-search decoding, thereby adjusting the token selection strategy.
    \item \textbf{SID:} \cite{ICLR2025_3cc87f2b} Adaptively filters visual tokens and applies contrastive decoding by explicitly amplifying potential hallucination signals.
    % \item \textbf{SID} \cite{s}: Enhances the attention weights of image tokens during decoding and subtracts the logits of pure-text outputs, thereby increasing visual involvement and reducing hallucinated generations.

\end{itemize}
\paragraph{Hyperparameter Setting.} 
To ensure robust reproducibility, the hyperparameters for ACE are fixed across all evaluated tasks. 
Key settings include an adversarial library size $K=500$ and patch count $N=2$ for the CIS module. 
For the DVS, we set the soft-gating temperature $\kappa=20$ and the rigidity threshold $\tau=0.6$. 
Regarding the hierarchical intervention, the mid-layer injection ratio $\alpha$ is set to $0.2$ specifically targeting transformer layers 12--24. 
Finally, for the dynamic game decoding, we utilize a base penalty coefficient $\lambda_{base}=1.0$, a sensitivity control factor $\gamma=100$, and a static visual reward weight $\beta=0.4$. All experiments were conducted using PyTorch 2.0.1 and CUDA 11.7. Model inference and evaluation were performed on a single NVIDIA A800 GPU (80GB). More details are in Appendix ~\ref{app:hyperparams}.

% Please add the following required packages to your document preamble:
% \usepackage[table,xcdraw]{xcolor}
% Beamer presentation requires \usepackage{colortbl} instead of \usepackage[table,xcdraw]{xcolor}

\subsection{Results}
% \textbf{Results on POPE.}
% POPE is designed to strictly evaluate the object-level grounding capabilities of MLLMs by testing their ability to verify visual content via yes/no questions. As presented in Table. ~\ref{tab1}, ACE demonstrates consistent superiority across four diverse model architectures and all three evaluation settings. Notably, under the most challenging Adversarial setting—specifically designed to trigger language priors using high-frequency co-occurring objects—ACE achieves the most significant performance gains. For instance, with LLaVA-NeXT, ACE boosts accuracy from 80.12\% to 85.68\%, while also delivering a 4.5\% absolute gain on LLaVA-1.5. This phenomenon strongly corroborates our hypothesis: ACE effectively disrupts the False Equilibrium dominated by narrative inertia, thereby restoring decoding authority to the visual evidence. Compared to existing intervention methods such as SID and OPERA, ACE not only maintains a lead in accuracy but also exhibits a comprehensive advantage in F1 scores, which is particularly telling. This indicates that ACE does not merely trade accuracy by naively increasing the probability of negative responses ("No"). Instead, through its dynamic game-theoretic mechanism, it preserves perceptual sensitivity to ground-truth objects while mitigating hallucinations, thus achieving more trustworthy and balanced predictions.
\textbf{Results on POPE.}
POPE evaluates object-level grounding capabilities via yes/no questions. As shown in Table~\ref{tab1}, ACE demonstrates consistent superiority across all models and settings. Notably, under the challenging Adversarial setting, ACE achieves the most significant gains: for LLaVA-NeXT, accuracy boosts from 80.12\% to 85.68\%, alongside a 4.5\% gain on LLaVA-1.5. This strongly corroborates that ACE effectively disrupts the False Equilibrium driven by narrative inertia. Furthermore, compared to baselines like SID and OPERA, ACE maintains a comprehensive advantage in F1 scores. This indicates that ACE does not naively increase negative responses (``No''), but instead preserves perceptual sensitivity through its dynamic game mechanism, achieving trustworthy and balanced predictions.

\textbf{Results on CHAIR.}
CHAIR serves as a widely adopted benchmark for evaluating hallucination in MLLMs during image captioning, comparing generated descriptions against ground-truth object annotations. It captures object-level precision via two metrics: CHAIR$_S$ (sentence-level) and CHAIR$_I$ (instance-level). As shown in the table. ~\ref{tab2}, ACE consistently outperforms existing decoding strategies across all mainstream model variants. ACE achieves the lowest CHAIR$_S$ and CHAIR$_I$ scores throughout, demonstrating its effectiveness in mitigating hallucinations during long-text generation. Compared to second-best methods such as SID and OPERA, ACE exhibits superior robustness. This advantage is particularly pronounced on LLaVA-1.5, where ACE substantially reduces CHAIR$_S$ from 48.3 (Greedy) to 40.4, and suppresses CHAIR$_I$ to 9.6, significantly outperforming OPERA’s 12.2. In contrast to the performance fluctuations observed in other methods across different architectures (e.g., VCD shows limited improvement on InstructBLIP), ACE demonstrates remarkable cross-architecture stability. This finding strongly supports our hypothesis: hallucinations in long-form descriptions often stem from "narrative inertia," where the language model gradually detaches from visual constraints during generation. By introducing a dynamic game-theoretic mechanism at each decoding step, ACE enforces alignment between language priors and visual information, thereby generating descriptions that are faithful to the image content without compromising descriptive richness.

\begin{figure}[t]  % 单栏用figure，双栏跨页用figure*
\vskip 0.2in
  \centering  % 居中对齐
  % 子图(a)：占单栏宽度的48%（留2%间距）
  \subfloat[LLaVA-1.5]{% 括号内是子图标签，括号外是标题
    \includegraphics[width=0.45\columnwidth]{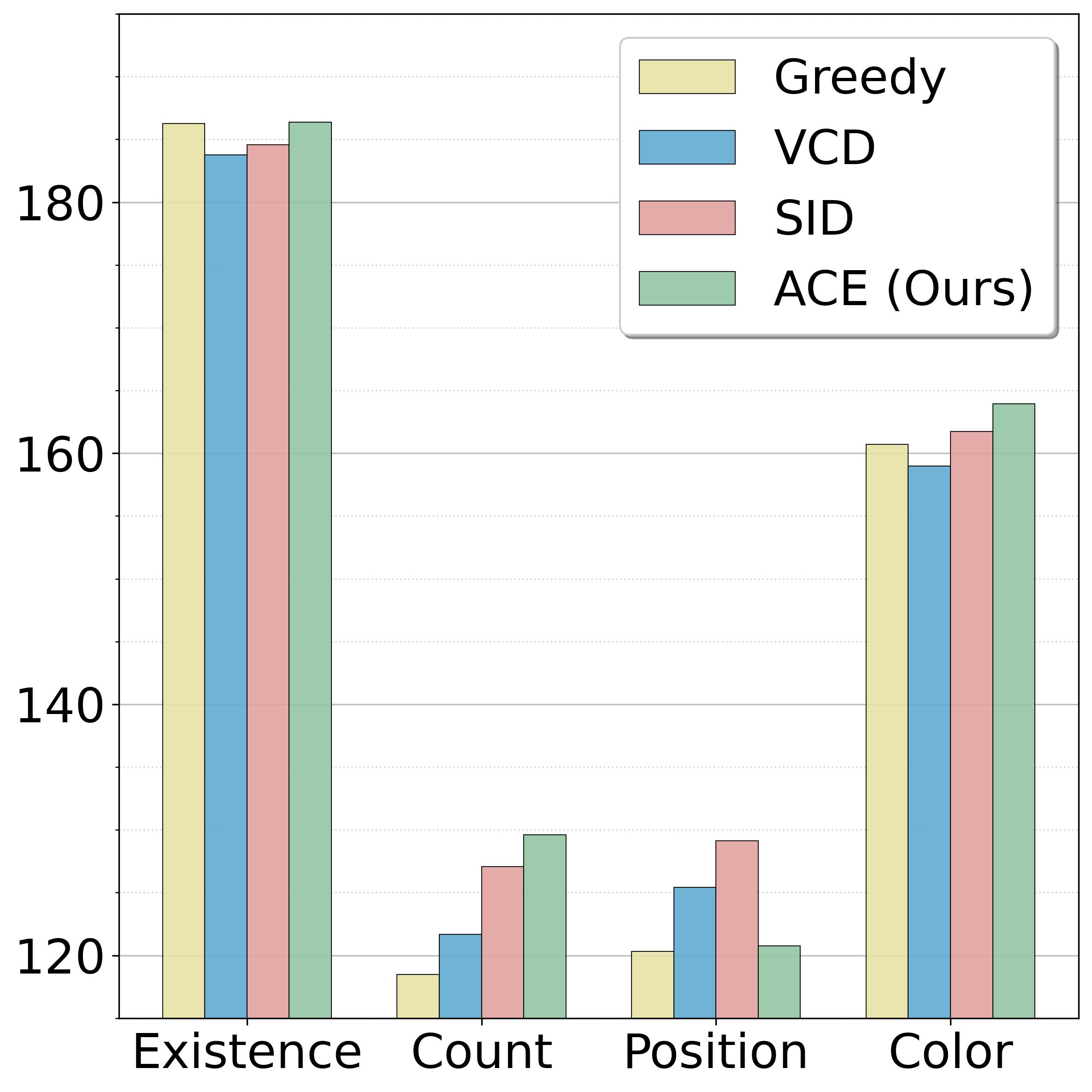}
  }
  \hfill  % 子图间自动填充间距（无多余空格）
  % 子图(b)：占单栏宽度的48%
  \subfloat[InstructBLIP]{%
    \includegraphics[width=0.45\columnwidth]{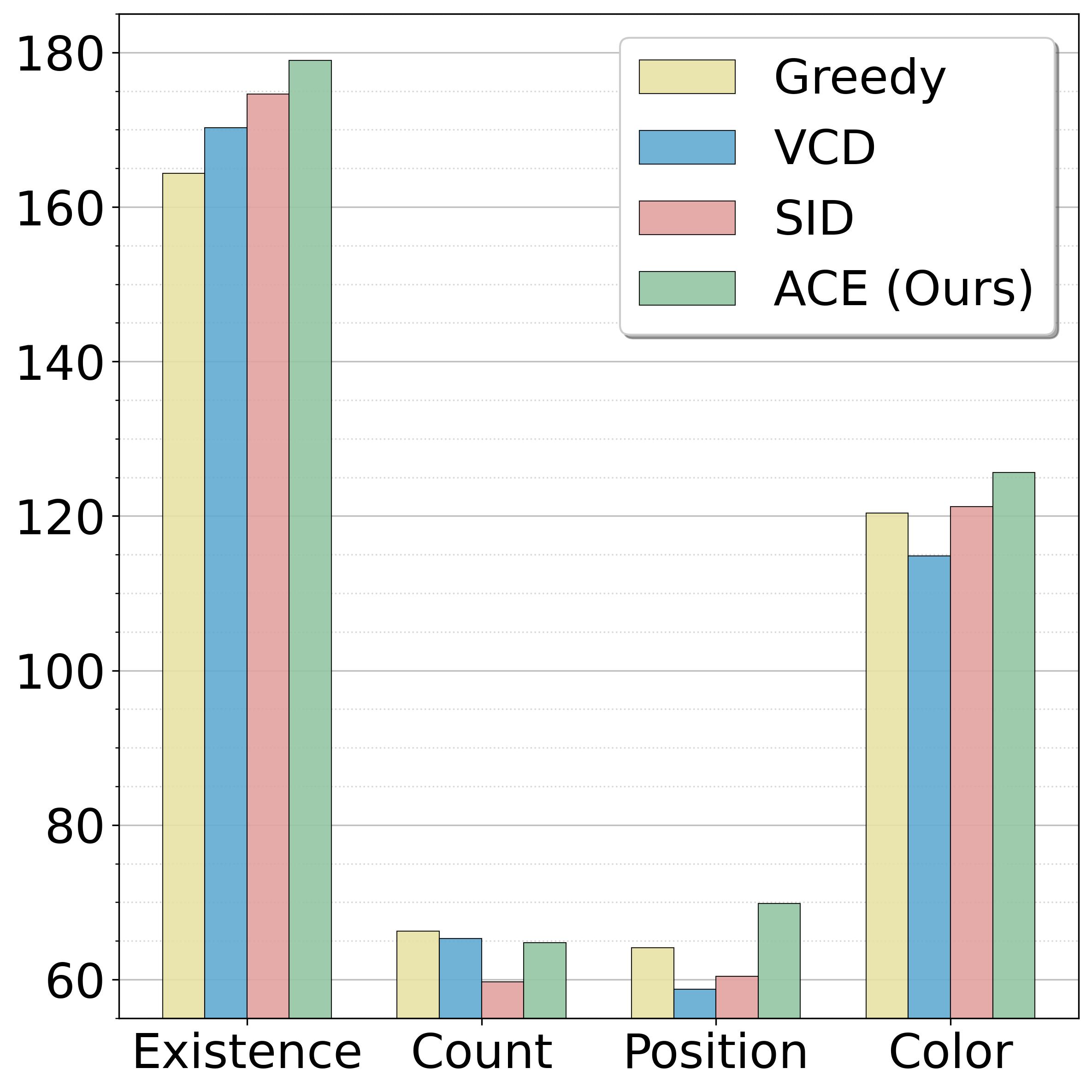}
  }
  % 整体图标题（可自定义）
  \caption{\textbf{Performance comparison on the MME hallucination subset.} 
We evaluate ACE against baseline decoding strategies on (a) LLaVA-1.5 and (b) InstructBLIP across four categories: Existence, Count, Position, and Color. }
\label{fig3}
  \vskip -0.2in
\end{figure}

\textbf{Results on MME.}
We conduct comprehensive experiments on the MME hallucination subset, which includes four types of hallucinations designed to assess the overall capability of MLLMs at the object level (i.e., Existence), attribute level (i.e., Count and Color), and relation level (i.e., Position). As illustrated in Fig.~\ref{fig3}, the proposed ACE consistently outperforms both the Greedy decoding and SOTA strategies across diverse hallucination categories and MLLM architectures. Notably, ACE demonstrates substantial gains in fine-grained attribute perception. On LLaVA-1.5, ACE boosts the Count score from 112.49 to 129.62, surpassing SID (127.06), and achieves the highest performance in Color recognition (163.94). On InstructBLIP, ACE achieves a remarkable breakthrough in Existence verification (improving from 164.37 to 178.98), significantly outperforming all baselines. This underscores ACE’s effectiveness in addressing a broader range of multimodal hallucinations beyond simple object existence. By restoring the decoding authority to visual information via its game-theoretic mechanism, ACE enhances the model's general perception capabilities, particularly in capturing subtle visual details such as quantity and color.

\begin{figure}[t]
  \vskip 0.2in
  \begin{center}
    % width=\textwidth 适配双栏总宽度（替代原单栏的 \columnwidth）
    % \centerline{\includegraphics[width=\textwidth]{icml_numpapers}}
    \includegraphics[width=0.9\columnwidth]{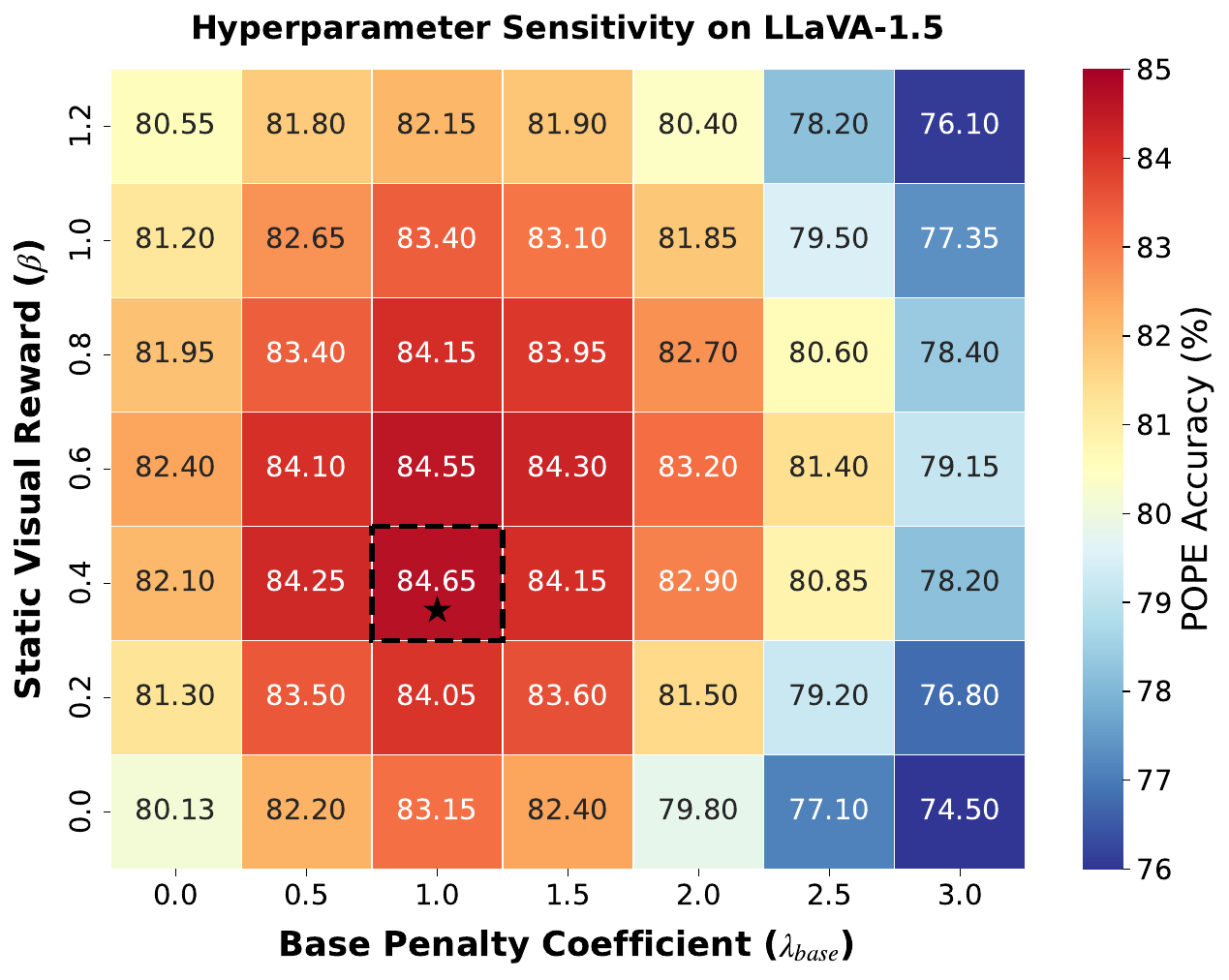} 
    \caption{\textbf{Hyperparameter sensitivity analysis of $\lambda_{base}$ and $\beta$ on LLaVA-1.5.} 
The heatmap visualizes POPE accuracy under varying combinations of the penalty coefficient $\lambda_{base}$ (suppressing Agent L) and the reward weight $\beta$ (reinforcing Agent V). 
A distinct ``Ridge of Equilibrium'' emerges, with the model achieving peak performance (84.65\%) at the optimal game point ($\lambda_{base}=1.0, \beta=0.4$). }
\label{fig:hyperparameter_heatmap}
    \label{fig4}
  \end{center}
  \vskip -0.3in 
\end{figure}

\textbf{Latency analysis.} Table~\ref{tab3} evaluates the inference efficiency of ACE on LLaVA-1.5 (7B). While competitive methods like OPERA ($4.92\times$) and SID ($3.97\times$) are hampered by iterative backtracking, ACE maintains a streamlined $2.24\times$ latency, notably outperforming even the dual-stream VCD ($2.34\times$). This efficiency is primarily attributed to our offline feature caching strategy. Unlike VCD, which necessitates a redundant and costly forward pass through the vision encoder for the distorted image, our CIS and DVS modules perform lightweight retrieval from a pre-computed library $\mathcal{B}$. By substituting expensive image re-encoding with simple matrix multiplications and mid-layer injections, ACE effectively bypasses the primary computational bottleneck of MLLMs. Consequently, the framework achieves an optimal equilibrium between factual grounding and inference speed, demonstrating strong potential for real-world deployment where low-latency response is critical.

\textbf{How to determine the optimal operating point for game equilibrium? }
In our dynamic game framework, the synergistic interaction between the penalty coefficient $\lambda_{base}$ (targeting Agent L) and the reward weight $\beta$ (targeting Agent V) is critical for hallucination mitigation. 
We investigate the performance under different parameter combinations via grid search, as illustrated in Fig.~\ref{fig4}. As shown in the heatmap, the performance distribution exhibits a distinct ``Ridge of Equilibrium'' rather than a simple linear trend. 
The optimal performance consistently converges around $(\lambda_{base}=1.0, \beta=0.4)$, suggesting that this configuration strikes the best balance between suppressing narrative inertia and preserving visual faithfulness.
Crucially, we observe a precipitous drop in accuracy when $\lambda_{base}$ exceeds $2.0$, regardless of the compensation from $\beta$. 
This suggests that excessive penalization leads to severe ``over-correction,'' where the model's reasoning capability collapses under the strict constraints. 
Conversely, an overly high reward ($\beta > 1.0$) yields diminishing returns by introducing visual noise. 
These empirical findings strongly support our theory: neither unilateral suppression nor reward suffices; only a precise \textbf{bidirectional synergy} can maintain the Nash Equilibrium of the system.

\begin{figure}[t]
  \vskip 0.2in
  \begin{center}
    % width=\textwidth 适配双栏总宽度（替代原单栏的 \columnwidth）
    % \centerline{\includegraphics[width=\textwidth]{icml_numpapers}}
    \includegraphics[width=0.95\columnwidth]{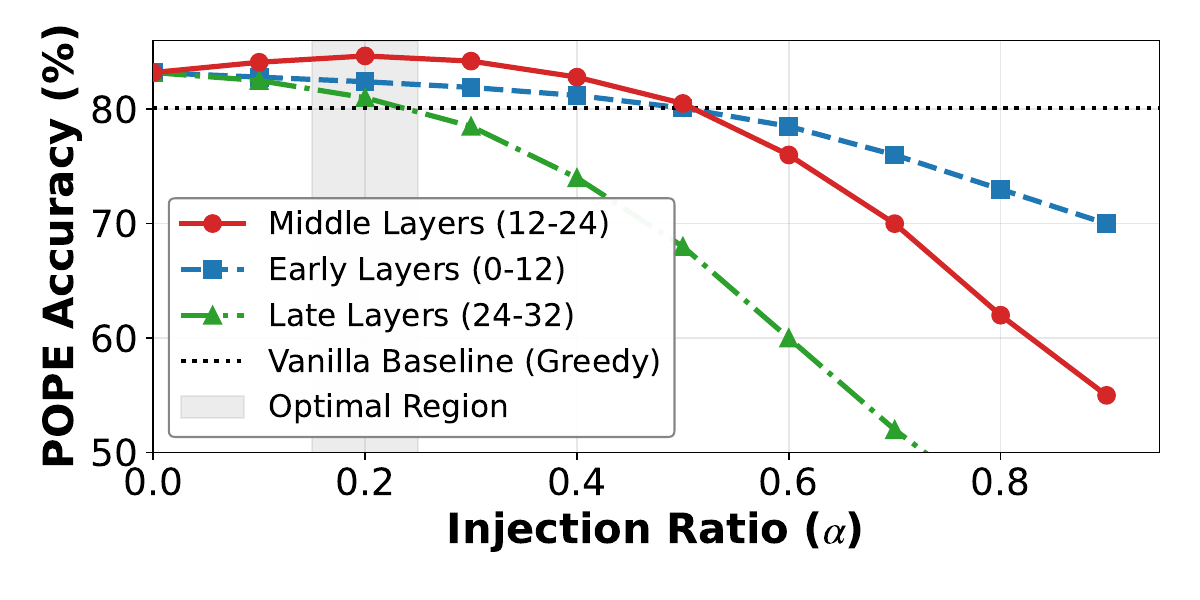} 
    \caption{\textbf{Impact of intervention depth and injection ratio $\alpha$ on LLaVA-1.5.} Mid-layer rectification serves as the optimal intervention window, whereas early and late injections suffer from feature mismatch and semantic conflict, respectively.}
    \label{fig5}
  \end{center}
  \vskip -0.3in % 可选：微调图片与下文的间距
\end{figure}

% \textbf{Impact of Intervention Depth and Intensity.} To assess the specific contribution of Mid-layer Hidden State Rectification within our hierarchical framework, we analyze the impact of injection depth and the mixing ratio $\alpha$ in Fig.~\ref{fig5}. 
% Notably, even at $\alpha=0$ (i.e., retaining only the final-layer dynamic game), the framework maintains a robust performance of 83.20\%, significantly outperforming the greedy baseline (80.13\%), which highlights the efficacy of CIS and DVS in reshaping the Payoff Matrix at the logit level. Building on this, injection within the Middle Layers (12--24) yields the optimal trajectory, peaking at \textbf{84.65\%} with $\alpha=0.2$. 
% This trend suggests that reinforcing Agent V during the formative stage of the VSB provides critical marginal gains by preventing visual information from being silenced by \textit{Narrative Inertia}. In contrast, early-layer injection results in a monotonic decline, likely due to the feature-level mismatch between raw shallow features and the high-level semantic signals carried by the DVS. 
% More critically, late-layer injection exhibits extreme fragility; performance collapses precipitously when $\alpha > 0.2$, as excessive visual intrusion into the deep layers dominated by Agent L causes severe \textit{semantic conflict} and attention failure. 
% These findings validate that fixing $\alpha=0.2$ within the middle layers offers the precise intervention required to achieve \textit{Dynamic Equilibrium} within the VSB.

\textbf{Impact of Intervention Depth and Intensity.} To assess Mid-layer Hidden State Rectification, we analyze the impact of injection depth and mixing ratio $\alpha$ in Fig.~\ref{fig5}. Notably, even at $\alpha=0$ (final-layer intervention only), the framework achieves 83.20\%, significantly outperforming the baseline (80.13\%) and highlighting the efficacy of reshaping the Payoff Matrix at the logit level. Optimal performance peaks at \textbf{84.65\%} with $\alpha=0.2$ in the Middle Layers (12--24). This suggests that reinforcing Agent V during the VSB's formative stage is critical to counteract \textit{Narrative Inertia}. Conversely, early-layer injection declines due to feature-level mismatch, while late-layer injection proves fragile; performance collapses when $\alpha > 0.2$ as excessive visual intrusion triggers severe \textit{semantic conflict} with Agent L. These findings validate that fixing $\alpha=0.2$ in the middle layers provides the precise intervention required for \textit{Dynamic Equilibrium}.

% \textbf{Ablation Study.}

% Table.~\ref{tab4} dissects the independent contribution of each module within the ACE framework. 
% A critical observation arises when removing the DVS (w/o DVS): while Accuracy improves over the baseline, the F1-score exhibits an anomalous drop ($80.96\% \to 80.25\%$). 
% This empirical evidence suggests that relying solely on the CIS to penalize Agent L forces the model into an over-conservative state, sacrificing recall to suppress hallucinations; thus, the positive visual anchor provided by DVS is indispensable for maintaining robustness. 
% Conversely, removing the CIS (w/o CIS) yields a high F1-score ($82.95\%$) by reinforcing Agent V, yet fails to achieve the lowest hallucination rates (CHAIR), indicating that without a ``Negative Probe'' to disrupt Narrative Inertia, reward mechanisms alone cannot fully counteract stubborn linguistic priors. 
% More critically, even with both CIS and DVS active at the final layer, the exclusion of mid-layer injection (w/o Mid-Layer) results in suboptimal performance (lagging by $0.7\%$ in Acc). 
% This finding strongly corroborates our VSB theory: visual information is structurally silenced within intermediate layers, and mere logit-level calibration is insufficient. 
% Therefore, Mid-Layer Rectification is essential to resurrect Agent V at the representational level, allowing the full ACE framework to achieve a true Dynamic Equilibrium.

\textbf{Ablation Study.} Table.~\ref{tab4} dissects the contribution of each ACE module. Removing DVS (\textit{w/o DVS}) improves accuracy over the baseline but causes an anomalous F1-score drop (80.96\%$\rightarrow$80.25\%). This suggests that solely penalizing Agent L via CIS forces the model into an over-conservative state; thus, the DVS positive anchor is indispensable for robustness. Conversely, omitting CIS (\textit{w/o CIS}) yields a high F1-score (82.95\%) but fails to achieve the lowest hallucination rates, indicating that without a ``Negative Probe'' to disrupt Narrative Inertia, reward mechanisms alone cannot fully counteract stubborn priors. Crucially, even with both modules active, excluding mid-layer injection (\textit{w/o Mid-Layer}) results in suboptimal performance (0.7\% Acc lag). This corroborates our VSB theory: visual information is structurally silenced within intermediate layers, making mere logit-level calibration insufficient. Therefore, Mid-Layer Rectification is essential to resurrect Agent V at the representational level, enabling ACE to achieve a true Dynamic Equilibrium.

\section{Conclusions, Limitations and Future Work}

\textbf{Conclusions.} In this paper, we propose Adversarial Counter-Commonsense Equilibrium (ACE), a novel training-free decoding framework. 
ACE operates by introducing counter-commonsense adversarial patches to induce visual-language response disparities, implementing synergistic intervention at hierarchical levels of the decoding process. 
Grounded in our analysis that decoding game disequilibrium results in visual information being structurally silenced within the Visual Semantic Buffers, ACE exploits the characteristic that hallucinations are sensitive to perturbations while authentic visual features remain stable. 
Experimental results demonstrate that ACE achieves consistent improvements across multiple benchmarks, striking a favorable balance between inference efficiency and hallucination suppression, while offering valuable insights into the information flow dynamics within MLLMs.

\textbf{Limitations and Future Work.} While robust, ACE's fixed hyperparameters may cause conservative generation in ambiguous contexts. Future work will introduce a lightweight gating network for adaptive intervention and leverage ACE-generated preference data for self-supervised alignment (e.g., DPO) to internalize the equilibrium mechanism, enhancing intrinsic model robustness.

\begin{table}[t]
\centering
\caption{\textbf{Ablation study on the individual contribution of each component in ACE.} We conduct experiments on LLaVA-1.5 to validate the effectiveness of the CIS, the DVS, and the Mid-Layer Rectification mechanism. Results are reported on the POPE (Adversarial split) and CHAIR benchmarks ($C_S$ and $C_I$), confirming the necessity of hierarchical synergistic intervention.}
\label{tab4}
\resizebox{\linewidth}{!}{
\begin{tabular}{l | c c c | c c | c c}
\toprule
\multirow{2}{*}{\textbf{Method}} & \multicolumn{3}{c|}{\textbf{Components}} & \multicolumn{2}{c|}{\textbf{POPE (Adv)}} & \multicolumn{2}{c}{\textbf{CHAIR}} \\
 & \textbf{CIS} & \textbf{DVS} & \makecell{\textbf{Mid-Layer}\\\textbf{Rectification}} & \textbf{Acc} $\uparrow$ & \textbf{F1} $\uparrow$ & \textbf{$C_S$} $\downarrow$ & \textbf{$C_I$} $\downarrow$ \\
\midrule
Baseline(LLaVA-1.5) & \ding{55} & \ding{55} & \ding{55} & 80.13 & 80.96 & 48.3 & 14.1 \\
\midrule
w/o DVS (Penalty Only) & \ding{51} & \ding{55} & \ding{55} & 81.85 & 80.25 & 45.2 & 12.9 \\
w/o CIS (Reward Only) & \ding{55} & \ding{51} & \ding{51} & 83.10 & \textbf{82.95} & 42.8 & 11.2 \\
w/o Mid-Layer (Late Only) & \ding{51} & \ding{51} & \ding{55} & 83.95 & 82.20 & 41.5 & 10.5 \\
\midrule
\rowcolor[HTML]{EFEFEF}
\textbf{ACE (Full)} & \ding{51} & \ding{51} & \ding{51} & \textbf{84.65} & 82.59 & \textbf{40.4} & \textbf{9.6} \\
\bottomrule
\end{tabular}
}
\vskip -0.1in % 可选：微调图片与下文的间距
\end{table}

% Acknowledgements should only appear in the accepted version.
% \section*{Acknowledgements}

% \textbf{Do not} include acknowledgements in the initial version of the paper
% submitted for blind review.

% If a paper is accepted, the final camera-ready version can (and usually should)
% include acknowledgements.  Such acknowledgements should be placed at the end of
% the section, in an unnumbered section that does not count towards the paper
% page limit. Typically, this will include thanks to reviewers who gave useful
% comments, to colleagues who contributed to the ideas, and to funding agencies
% and corporate sponsors that provided financial support.
\newpage

% This paper presents work whose goal is to advance the field of Machine
% Learning. There are many potential societal consequences of our work, none
% which we feel must be specifically highlighted here.

% \section*{Impact Statement}
% This paper introduces Adversarial Counter-Commonsense Equilibrium, a training-free framework designed to mitigate hallucinations in Multimodal Large Language Models. There are many potential societal consequences of our work, none which we feel must be specifically highlighted here.

% In the unusual situation where you want a paper to appear in the
% references without citing it in the main text, use \nocite
\nocite{langley00}

\bibliography{main}

@inproceedings{park2025second,
  title     = {SECOND: Mitigating Perceptual Hallucination in Vision-Language Models via Selective and Contrastive Decoding},
  author    = {Park, Woohyeon and Kim, Woojin and Kim, Jaeik and Do, Jaeyoung},
  booktitle = {Proceedings of the 42nd International Conference on Machine Learning (ICML)},
  year      = {2025},
  series    = {Proceedings of Machine Learning Research},
  publisher = {PMLR}
}

@inproceedings{woo2025don,
  title={Don’t miss the forest for the trees: Attentional vision calibration for large vision language models},
  author={Woo, Sangmin and Kim, Donguk and Jang, Jaehyuk and Choi, Yubin and Kim, Changick},
  booktitle={Findings of the Association for Computational Linguistics: ACL 2025},
  pages={1927--1951},
  year={2025}
}

@inproceedings{che2025hallucinatory,
  title={Hallucinatory Image Tokens: A Training-free EAZY Approach to Detecting and Mitigating Object Hallucinations in LVLMs},
  author={Che, Liwei and Liu, Tony Qingze and Jia, Jing and Qin, Weiyi and Tang, Ruixiang and Pavlovic, Vladimir},
  booktitle={Proceedings of the IEEE/CVF International Conference on Computer Vision},
  pages={21635--21644},
  year={2025}
}

@article{li2025mitigating,
  title={Mitigating hallucination for large vision language model by inter-modality correlation calibration decoding},
  author={Li, Jiaming and Zhang, Jiacheng and Jie, Zequn and Ma, Lin and Li, Guanbin},
  journal={arXiv preprint arXiv:2501.01926},
  year={2025}
}

@inproceedings{rohrbach-etal-2018-object,
    title = "Object Hallucination in Image Captioning",
    author = "Rohrbach, Anna  and
      Hendricks, Lisa Anne  and
      Burns, Kaylee  and
      Darrell, Trevor  and
      Saenko, Kate",
    editor = "Riloff, Ellen  and
      Chiang, David  and
      Hockenmaier, Julia  and
      Tsujii, Jun{'}ichi",
    booktitle = "Proceedings of the 2018 Conference on Empirical Methods in Natural Language Processing",
    month = oct # "-" # nov,
    year = "2018",
    address = "Brussels, Belgium",
    publisher = "Association for Computational Linguistics",
    doi = "10.18653/v1/D18-1437",
    pages = "4035--4045"
}

@inproceedings{li-etal-2023-evaluating,
    title = "Evaluating Object Hallucination in Large Vision-Language Models",
    author = "Li, Yifan  and
      Du, Yifan  and
      Zhou, Kun  and
      Wang, Jinpeng  and
      Zhao, Xin  and
      Wen, Ji-Rong",
    editor = "Bouamor, Houda  and
      Pino, Juan  and
      Bali, Kalika",
    booktitle = "Proceedings of the 2023 Conference on Empirical Methods in Natural Language Processing",
    month = dec,
    year = "2023",
    address = "Singapore",
    publisher = "Association for Computational Linguistics",
    doi = "10.18653/v1/2023.emnlp-main.20",
    pages = "292--305"
}

@inproceedings{fu2025mme,
  title={Mme: A comprehensive evaluation benchmark for multimodal large language models},
  author={Fu, Chaoyou and Chen, Peixian and Shen, Yunhang and Qin, Yulei and Zhang, Mengdan and Lin, Xu and Yang, Jinrui and Zheng, Xiawu and Li, Ke and Sun, Xing and others},
  booktitle={The Thirty-ninth Annual Conference on Neural Information Processing Systems Datasets and Benchmarks Track},
  year={2025}
}

@inproceedings{back2025watermarking,
  title={Watermarking for Factuality: Guiding Vision-Language Models Toward Truth via Tri-layer Contrastive Decoding},
  author={Back, Kyungryul and Park, Seongbeom and Kim, Milim and Kwon, Mincheol and Lee, SangHyeok and Lee, Hyunyoung and Cho, Junhee and Park, Seunghyun and Kim, Jinkyu},
  booktitle={Findings of the Association for Computational Linguistics: EMNLP 2025},
  pages={8371--8387},
  year={2025}
}

@article{cho2025revisitseediscloselanguage,
      title={Revisit What You See: Disclose Language Prior in Vision Tokens for LVLM Decoding}, 
      author={Beomsik Cho and Jaehyung Kim},
      year={2025},
      journal={arXiv preprint arXiv:2506.09522},
}

@inproceedings{leng2024mitigating,
  title={Mitigating object hallucinations in large vision-language models through visual contrastive decoding},
  author={Leng, Sicong and Zhang, Hang and Chen, Guanzheng and Li, Xin and Lu, Shijian and Miao, Chunyan and Bing, Lidong},
  booktitle={Proceedings of the IEEE/CVF Conference on Computer Vision and Pattern Recognition},
  pages={13872--13882},
  year={2024}
}

@inproceedings{
kim2024instructive,
title={Instructive Decoding: Instruction-Tuned Large Language Models are Self-Refiner from Noisy Instructions},
author={Taehyeon Kim and Joonkee Kim and Gihun Lee and Se-Young Yun},
booktitle={The Twelfth International Conference on Learning Representations},
year={2024},
}

@inproceedings{wang-etal-2024-mitigating,
    title = "Mitigating Hallucinations in Large Vision-Language Models with Instruction Contrastive Decoding",
    author = "Wang, Xintong  and
      Pan, Jingheng  and
      Ding, Liang  and
      Biemann, Chris",
    editor = "Ku, Lun-Wei  and
      Martins, Andre  and
      Srikumar, Vivek",
    booktitle = "Findings of the Association for Computational Linguistics: ACL 2024",
    month = aug,
    year = "2024",
    address = "Bangkok, Thailand",
    publisher = "Association for Computational Linguistics",
    doi = "10.18653/v1/2024.findings-acl.937",
    pages = "15840--15853",
}

@inproceedings{ICLR2025_3cc87f2b,
 author = {Huo, Fushuo and Xu, Wenchao and Zhang, Zhong and Wang, Haozhao and Chen, Zhicheng and Zhao, Peilin},
 booktitle = {International Conference on Representation Learning},
 editor = {Y. Yue and A. Garg and N. Peng and F. Sha and R. Yu},
 pages = {24272--24295},
 title = {Self-Introspective Decoding: Alleviating Hallucinations for Large Vision-Language Models},
 volume = {2025},
 year = {2025}
}

@inproceedings{ICLR2024_edc36117,
 author = {Chuang, Yung-Sung and Xie, Yujia and Luo, Hongyin and Kim, Yoon and Glass, James R and He, Pengcheng},
 booktitle = {International Conference on Representation Learning},
 editor = {B. Kim and Y. Yue and S. Chaudhuri and K. Fragkiadaki and M. Khan and Y. Sun},
 pages = {54158--54183},
 title = {DoLa: Decoding by Contrasting Layers Improves Factuality in Large Language Models},
 volume = {2024},
 year = {2024}
}

@article{zhao2024mitigating,
  title={Mitigating object hallucination in large vision-language models via image-grounded guidance},
  author={Zhao, Linxi and Deng, Yihe and Zhang, Weitong and Gu, Quanquan},
  journal={arXiv preprint arXiv:2402.08680},
  year={2024}
}

@inproceedings{min2025mitigating,
  title={Mitigating hallucinations in large vision-language models via summary-guided decoding},
  author={Min, Kyungmin and Kim, Minbeom and Lee, Kang-il and Lee, Dongryeol and Jung, Kyomin},
  booktitle={Findings of the Association for Computational Linguistics: NAACL 2025},
  pages={4183--4198},
  year={2025}
}

@article{li2025hidden,
  title={The hidden life of tokens: Reducing hallucination of large vision-language models via visual information steering},
  author={Li, Zhuowei and Shi, Haizhou and Gao, Yunhe and Liu, Di and Wang, Zhenting and Chen, Yuxiao and Liu, Ting and Zhao, Long and Wang, Hao and Metaxas, Dimitris N},
  journal={arXiv preprint arXiv:2502.03628},
  year={2025}
}

@article{zoulook,
  title={Look Twice Before You Answer: Memory-Space Visual Retracing for Hallucination Mitigation in Multimodal Large Language Models}, 
  author={Zou, Xin and Wang, Yizhou and Yan, Yibo and Lyu, Yuanhuiyi and Zheng, Kening and Huang, Sirui and Chen, Junkai and Jiang, Peijie and Liu, Jia and Tang, Chang and Hu, Xuming},
  journal={Forty-second International Conference on Machine Learning (ICML)},
  year={2025}
}

@inproceedings{ICLR2025_109cf25c,
 author = {Zhang, Ce and Wan, Zifu and Kan, Zhehan and Ma, Martin Q. and Stepputtis, Simon and Ramanan, Deva  and Salakhutdinov, Russ and Morency, Louis-Philippe and Sycara, Katia and Xie, Yaqi},
 booktitle = {International Conference on Representation Learning},
 editor = {Y. Yue and A. Garg and N. Peng and F. Sha and R. Yu},
 pages = {5494--5524},
 title = {Self-Correcting Decoding with Generative Feedback for Mitigating Hallucinations in Large Vision-Language Models},
 volume = {2025},
 year = {2025}
}

@InProceedings{pmlr-v267-luo25b,
  title = 	 {Probing Visual Language Priors in {VLM}s},
  author =       {Luo, Tiange and Cao, Ang and Lee, Gunhee and Johnson, Justin and Lee, Honglak},
  booktitle = 	 {Proceedings of the 42nd International Conference on Machine Learning},
  pages = 	 {41120--41156},
  year = 	 {2025},
  editor = 	 {Singh, Aarti and Fazel, Maryam and Hsu, Daniel and Lacoste-Julien, Simon and Berkenkamp, Felix and Maharaj, Tegan and Wagstaff, Kiri and Zhu, Jerry},
  volume = 	 {267},
  series = 	 {Proceedings of Machine Learning Research},
  month = 	 {13--19 Jul},
  publisher =    {PMLR}
}

@article{lu2024deepseek,
  title={Deepseek-vl: towards real-world vision-language understanding},
  author={Lu, Haoyu and Liu, Wen and Zhang, Bo and Wang, Bingxuan and Dong, Kai and Liu, Bo and Sun, Jingxiang and Ren, Tongzheng and Li, Zhuoshu and Yang, Hao and others},
  journal={arXiv preprint arXiv:2403.05525},
  year={2024}
}

@inproceedings{lai2024lisa,
  title={Lisa: Reasoning segmentation via large language model},
  author={Lai, Xin and Tian, Zhuotao and Chen, Yukang and Li, Yanwei and Yuan, Yuhui and Liu, Shu and Jia, Jiaya},
  booktitle={Proceedings of the IEEE/CVF Conference on Computer Vision and Pattern Recognition},
  pages={9579--9589},
  year={2024}
}

@inproceedings{geng2024instructdiffusion,
  title={Instructdiffusion: A generalist modeling interface for vision tasks},
  author={Geng, Zigang and Yang, Binxin and Hang, Tiankai and Li, Chen and Gu, Shuyang and Zhang, Ting and Bao, Jianmin and Zhang, Zheng and Li, Houqiang and Hu, Han and others},
  booktitle={Proceedings of the IEEE/CVF Conference on computer vision and pattern recognition},
  pages={12709--12720},
  year={2024}
}

@article{bai2024hallucination,
  title={Hallucination of multimodal large language models: A survey},
  author={Bai, Zechen and Wang, Pichao and Xiao, Tianjun and He, Tong and Han, Zongbo and Zhang, Zheng and Shou, Mike Zheng},
  journal={arXiv preprint arXiv:2404.18930},
  year={2024}
}

@inproceedings{guan2024hallusionbench,
  title={Hallusionbench: an advanced diagnostic suite for entangled language hallucination and visual illusion in large vision-language models},
  author={Guan, Tianrui and Liu, Fuxiao and Wu, Xiyang and Xian, Ruiqi and Li, Zongxia and Liu, Xiaoyu and Wang, Xijun and Chen, Lichang and Huang, Furong and Yacoob, Yaser and others},
  booktitle={Proceedings of the IEEE/CVF Conference on Computer Vision and Pattern Recognition},
  pages={14375--14385},
  year={2024}
}

@inproceedings{chen2024towards,
  title={Towards injecting medical visual knowledge into multimodal llms at scale},
  author={Chen, Junying and Gui, Chi and Ouyang, Ruyi and Gao, Anningzhe and Chen, Shunian and Chen, Guiming Hardy and Wang, Xidong and Cai, Zhenyang and Ji, Ke and Wan, Xiang and others},
  booktitle={Proceedings of the 2024 conference on empirical methods in natural language processing},
  pages={7346--7370},
  year={2024}
}

@article{liu2024survey,
  title={A survey on hallucination in large vision-language models},
  author={Liu, Hanchao and Xue, Wenyuan and Chen, Yifei and Chen, Dapeng and Zhao, Xiutian and Wang, Ke and Hou, Liping and Li, Rongjun and Peng, Wei},
  journal={arXiv preprint arXiv:2402.00253},
  year={2024}
}

@article{liu2023visual,
  title={Visual instruction tuning},
  author={Liu, Haotian and Li, Chunyuan and Wu, Qingyang and Lee, Yong Jae},
  journal={Advances in neural information processing systems},
  volume={36},
  pages={34892--34916},
  year={2023}
}

@article{chen2023shikra,
  title={Shikra: Unleashing multimodal llm's referential dialogue magic},
  author={Chen, Keqin and Zhang, Zhao and Zeng, Weili and Zhang, Richong and Zhu, Feng and Zhao, Rui},
  journal={arXiv preprint arXiv:2306.15195},
  year={2023}
}

@article{chen2024far,
    title={How Far Are We to GPT-4V? Closing the Gap to Commercial Multimodal Models with Open-Source Suites},
    author={Chen, Zhe and Wang, Weiyun and Tian, Hao and Ye, Shenglong and Gao, Zhangwei and Cui, Erfei and Tong, Wenwen and Hu, Kongzhi and Luo, Jiapeng and Ma, Zheng and others},
    journal={arXiv preprint arXiv:2404.16821},
    year={2024}
  }

@inproceedings{chen2024internvl,
    title={Internvl: Scaling up vision foundation models and aligning for generic visual-linguistic tasks},
    author={Chen, Zhe and Wu, Jiannan and Wang, Wenhai and Su, Weijie and Chen, Guo and Xing, Sen and Zhong, Muyan and Zhang, Qinglong and Zhu, Xizhou and Lu, Lewei and others},
    booktitle={Proceedings of the IEEE/CVF Conference on Computer Vision and Pattern Recognition},
    pages={24185--24198},
    year={2024}
  }

@article{chen2024halc,
  title={Halc: Object hallucination reduction via adaptive focal-contrast decoding},
  author={Chen, Zhaorun and Zhao, Zhuokai and Luo, Hongyin and Yao, Huaxiu and Li, Bo and Zhou, Jiawei},
  journal={arXiv preprint arXiv:2403.00425},
  year={2024}
}

@inproceedings{favero2024multi,
  title={Multi-modal hallucination control by visual information grounding},
  author={Favero, Alessandro and Zancato, Luca and Trager, Matthew and Choudhary, Siddharth and Perera, Pramuditha and Achille, Alessandro and Swaminathan, Ashwin and Soatto, Stefano},
  booktitle={Proceedings of the IEEE/CVF Conference on Computer Vision and Pattern Recognition},
  pages={14303--14312},
  year={2024}
}

@inproceedings{gunjal2024detecting,
  title={Detecting and preventing hallucinations in large vision language models},
  author={Gunjal, Anisha and Yin, Jihan and Bas, Erhan},
  booktitle={Proceedings of the AAAI Conference on Artificial Intelligence},
  volume={38},
  number={16},
  pages={18135--18143},
  year={2024}
}

@inproceedings{kim2023exposing,
  title={Exposing and mitigating spurious correlations for cross-modal retrieval},
  author={Kim, Jae Myung and Koepke, A and Schmid, Cordelia and Akata, Zeynep},
  booktitle={Proceedings of the IEEE/CVF conference on computer vision and pattern recognition},
  pages={2585--2595},
  year={2023}
}

@inproceedings{ICLR2024_fc625e83,
 author = {Zhou, Yiyang and Cui, Chenhang and Yoon, Jaehong and Zhang, Linjun and Deng, Zhun and Finn, Chelsea and Bansal, Mohit and Yao, Huaxiu},
 booktitle = {International Conference on Representation Learning},
 editor = {B. Kim and Y. Yue and S. Chaudhuri and K. Fragkiadaki and M. Khan and Y. Sun},
 pages = {56969--56998},
 title = {Analyzing and Mitigating Object Hallucination in Large Vision-Language Models},
 volume = {2024},
 year = {2024}
}

@inproceedings{yu2024rlhf,
  title={Rlhf-v: Towards trustworthy mllms via behavior alignment from fine-grained correctional human feedback},
  author={Yu, Tianyu and Yao, Yuan and Zhang, Haoye and He, Taiwen and Han, Yifeng and Cui, Ganqu and Hu, Jinyi and Liu, Zhiyuan and Zheng, Hai-Tao and Sun, Maosong and others},
  booktitle={Proceedings of the IEEE/CVF Conference on Computer Vision and Pattern Recognition},
  pages={13807--13816},
  year={2024}
}

@inproceedings{ICLR2024_0b408293,
 author = {Darcet, Timoth\'{e}e and Oquab, Maxime and Mairal, Julien and Bojanowski, Piotr},
 booktitle = {International Conference on Representation Learning},
 editor = {B. Kim and Y. Yue and S. Chaudhuri and K. Fragkiadaki and M. Khan and Y. Sun},
 pages = {2632--2652},
 title = {Vision Transformers Need Registers},
 volume = {2024},
 year = {2024}
}

@article{ren2024hyper,
  title={Hyper-sd: Trajectory segmented consistency model for efficient image synthesis},
  author={Ren, Yuxi and Xia, Xin and Lu, Yanzuo and Zhang, Jiacheng and Wu, Jie and Xie, Pan and Wang, Xing and Xiao, Xuefeng},
  journal={Advances in Neural Information Processing Systems},
  volume={37},
  pages={117340--117362},
  year={2024}
}

@article{li2024llava,
  title={Llava-next-interleave: Tackling multi-image, video, and 3d in large multimodal models},
  author={Li, Feng and Zhang, Renrui and Zhang, Hao and Zhang, Yuanhan and Li, Bo and Li, Wei and Ma, Zejun and Li, Chunyuan},
  journal={arXiv preprint arXiv:2407.07895},
  year={2024}
}

@inproceedings{liu2024improved,
  title={Improved baselines with visual instruction tuning},
  author={Liu, Haotian and Li, Chunyuan and Li, Yuheng and Lee, Yong Jae},
  booktitle={Proceedings of the IEEE/CVF conference on computer vision and pattern recognition},
  pages={26296--26306},
  year={2024}
}

@article{dai2023instructblip,
  title={Instructblip: Towards general-purpose vision-language models with instruction tuning},
  author={Dai, Wenliang and Li, Junnan and Li, Dongxu and Tiong, Anthony and Zhao, Junqi and Wang, Weisheng and Li, Boyang and Fung, Pascale N and Hoi, Steven},
  journal={Advances in neural information processing systems},
  volume={36},
  pages={49250--49267},
  year={2023}
}

@inproceedings{huang2024opera,
  title={Opera: Alleviating hallucination in multi-modal large language models via over-trust penalty and retrospection-allocation},
  author={Huang, Qidong and Dong, Xiaoyi and Zhang, Pan and Wang, Bin and He, Conghui and Wang, Jiaqi and Lin, Dahua and Zhang, Weiming and Yu, Nenghai},
  booktitle={Proceedings of the IEEE/CVF Conference on Computer Vision and Pattern Recognition},
  pages={13418--13427},
  year={2024}
}

@inproceedings{liu2024paying,
  title={Paying more attention to image: A training-free method for alleviating hallucination in lvlms},
  author={Liu, Shi and Zheng, Kecheng and Chen, Wei},
  booktitle={European Conference on Computer Vision},
  pages={125--140},
  year={2024},
  organization={Springer}
}

@inproceedings{jiang2025visiontransformersdontneed,
      title={Vision Transformers Don't Need Trained Registers}, 
      author={Nick Jiang and Amil Dravid and Alexei Efros and Yossi Gandelsman},
      booktitle={The Thirty-ninth Annual Conference on Neural Information Processing Systems Datasets and Benchmarks Track},
      year={2025}
}
\bibliographystyle{icml2026}

%%%%%%%%%%%%%%%%%%%%%%%%%%%%%%%%%%%%%%%%%%%%%%%%%%%%%%%%%%%%%%%%%%%%%%%%%%%%%%%
%%%%%%%%%%%%%%%%%%%%%%%%%%%%%%%%%%%%%%%%%%%%%%%%%%%%%%%%%%%%%%%%%%%%%%%%%%%%%%%
% APPENDIX
%%%%%%%%%%%%%%%%%%%%%%%%%%%%%%%%%%%%%%%%%%%%%%%%%%%%%%%%%%%%%%%%%%%%%%%%%%%%%%%
%%%%%%%%%%%%%%%%%%%%%%%%%%%%%%%%%%%%%%%%%%%%%%%%%%%%%%%%%%%%%%%%%%%%%%%%%%%%%%%
\newpage
\appendix
\onecolumn
% --- Appendix: More Backgrounds ---

\section{More Backgrounds}
\label{app:background}

\subsection{The Rise of MLLM Hallucinations}
In recent years, Multimodal Large Language Models (MLLMs) have achieved remarkable progress in vision--language understanding and generation tasks \cite{liu2023visual, chen2023shikra, chen2024far}. Leveraging powerful pre-trained language model backbones and efficient multimodal alignment mechanisms, MLLMs demonstrate strong performance across a wide range of applications \cite{chen2024internvl}. However, as their overall capabilities continue to advance rapidly, hallucination---generating plausible yet factually incorrect outputs---has emerged as a critical bottleneck. Extensive studies have shown that MLLMs frequently produce outputs that contradict the actual visual content \cite{chen2024halc, favero2024multi}. Consequently, hallucination is now widely recognized as a fundamental challenge that undermines both model trustworthiness and real-world deployment \cite{guan2024hallusionbench, gunjal2024detecting}.

\subsection{Taxonomy of Mitigation Strategies}
\textbf{Training and Post-Processing.} 
Early studies primarily focused on the training stage, aiming to alleviate hallucinations through fine-grained modality alignment \cite{rohrbach-etal-2018-object} or by optimizing the training data distribution \cite{kim2023exposing}. Subsequent work explored more advanced strategies, such as fine-tuning models with specially constructed hallucination-related datasets \cite{gunjal2024detecting} or introducing auxiliary models to perform post-hoc correction of generated outputs \cite{ICLR2024_fc625e83}. More recently, Reinforcement Learning from Human Feedback (RLHF) \cite{yu2024rlhf} has been incorporated to further enhance the reliability of generated content. Nevertheless, while promising, these approaches typically rely on additional data, models, or training procedures, leading to increased computational costs and deployment complexity.

\textbf{Inference-Time Decoding Details.} 
Our work aligns with the test-time decoding paradigm, which suppresses hallucinations without updating model parameters. Based on the intervention mechanism, these methods can be categorized into two main streams:

\begin{itemize}
    \item \textbf{Contrastive Decoding Strategies:} These methods mitigates hallucinations by amplifying the difference between a ``positive'' distribution and a constructed ``negative'' distribution. 
    For instance, \textbf{VCD} \cite{leng2024mitigating} and \textbf{ICD} \cite{kim2024instructive} induce hallucinations via visual uncertainty (e.g., image blurring) to form a negative logit reference. \textbf{OPERA} \cite{huang2024opera} serves as a penalty term to alleviate over-confidence in beam search. Similarly, methods like \textbf{DOLA} \cite{ICLR2024_edc36117} and \textbf{TCD} \cite{back2025watermarking} treat early transformer layers as a negative proxy for linguistic priors. By subtracting these negative components from the final logits, they aim to highlight contextually faithful tokens.
    
    \item \textbf{Attention-based Correction:} This stream focuses on the internal information flow, specifically targeting the ``attention sink'' phenomenon where focus collapses onto background regions \cite{zhao2024mitigating, li2025hidden}. 
    Most approaches regard these condensed tokens as the source of hallucination. \textbf{IMCCD} \cite{li2025mitigating}, \textbf{ReVisiT} \cite{cho2025revisitseediscloselanguage}, \textbf{SECOND} \cite{park2025second} and \textbf{Pai} \cite{liu2024paying} explicitly mask or re-weight these high-attention tokens to force the model to attend to other visual regions. \textbf{AVISC} \cite{woo2025don} leverage these unstable tokens to construct negative samples for contrastive suppression. Similarly, \textbf{SID} \cite{ICLR2025_3cc87f2b} adaptively filters low-importance visual tokens to disrupt the hallucination pathway.
\end{itemize}

\textbf{Distinction of ACE.} 
In contrast to the aforementioned approaches, ACE does not seek to eliminate or bypass attention sinks. Instead, based on our theoretical analysis of Visual Semantic Buffers (VSBs), we treat these sinks as carriers of implicitly buffered visual information. ACE leverages their inherent stability under counter-commonsense perturbations to decouple valid visual information from linguistic priors, thereby enabling robust hallucination mitigation while preserving the model's attention homeostasis.

\section{Theoretical Analysis}
\label{app:game_theory}

This section provides the rigorous mathematical foundation for ACE. We first strictly derive the formation of the VSB to prove that visual information is not lost but buffered. Subsequently, we model the decoding process as a dynamic game to validate the effectiveness of the ACE strategy.

\subsection{Theoretical Analysis of VSB}
\label{app:vsb_theory}

We provide a formal proof for the "Attention Sink" phenomenon, demonstrating that it functions as a structural container for visual information.

\subsubsection{Preliminaries and Notations}
Let $I$ denote the input image. After processing by the ViT encoder, we obtain a sequence of visual tokens $\mathbf{H} \in \mathbb{R}^{N \times D_v}$. At the $t$-th decoding step, we define the following variables:
\begin{itemize}
    \item $\mathbf{q}_t \in \mathbb{R}^{D_{llm}}$: The Textual Query Vector encoding global semantic requirements.
    \item $\mathbf{k}_j \in \mathbb{R}^{D_{llm}}$: The projected $j$-th Visual Key Vector.
    \item $\mathbf{c}_t = \sum_j A_{t,j} \mathbf{v}_j$: The Visual Context Vector, where the attention weight is given by $A_{t,j} = \text{Softmax}(\mathbf{q}_t^\top \mathbf{k}_j / \sqrt{d})$.
\end{itemize}

\subsubsection{Register Neurons Induce Norm Inflation}
\begin{proposition}[Register Neurons Induce Norm Inflation]
\label{prop:norm_inflation}
Based on observations by \cite{jiang2025visiontransformersdontneed, ICLR2024_0b408293}, a sparse set of \textbf{register neurons} $\mathcal{R}$ spontaneously emerges within the MLP layers of the ViT. For a background token $\mathbf{h}_{bg}$, its feature update is dominated by these neurons:
\begin{equation*}
    \mathbf{h}_{bg}^{(l+1)} \approx \mathbf{h}_{bg}^{(l)} + \sum_{r \in \mathcal{R}} \phi(\mathbf{w}_{in}^{r \top} \mathbf{h}_{bg}^{(l)}) \mathbf{w}_{out}^r
\end{equation*}
Since register neurons exhibit excessively high positive activations ($\phi(\cdot) \gg 0$) and constructive interference in their output weights, the $L_2$ norm of specific background tokens accumulates cumulatively with layer depth:
\begin{equation*}
    \|\mathbf{h}_{bg}\|_2 \gg \max_{j \in \text{content}} \|\mathbf{h}_j\|_2
\end{equation*}
We define these tokens possessing extreme norms as the \textbf{physical prototypes of VSB}.
\end{proposition}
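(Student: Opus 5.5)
The argument tracks the norm of a background token layer by layer through the residual stream. The norm grows because a fixed sparse set of register neurons writes, again and again, into nearly the same output direction. It rests on the empirical premises cited from \cite{ICLR2024_0b408293, jiang2025visiontransformersdontneed}, which are stated as modelling assumptions:
\begin{itemize}
\item[(A1)] For $r\in\mathcal{R}$ and background tokens, the pre-activation is bounded below: $\mathbf{w}_{in}^{r\top}\mathbf{h}_{bg}^{(l)}\ge m>0$, so $\phi(\cdot)\ge a>0$.
\item[(A2)] Constructive interference: the output weights are positively aligned with a common unit direction $\mathbf{u}$, i.e. $\mathbf{u}^\top\mathbf{w}_{out}^r\ge c>0$ for all $r\in\mathcal{R}$.
\item[(A3)] For content tokens $j$, the MLP and attention updates are bounded in norm by some $\epsilon$ per layer (with LayerNorm keeping the inputs bounded).
\item[(A4)] The non-register contribution to a background token, collected as an error term $\mathbf{e}^{(l)}$, satisfies $\|\mathbf{e}^{(l)}\|\le\epsilon$.
\end{itemize}

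First, project the update onto $\mathbf{u}$. The projection gives
\[
\mathbf{u}^\top\mathbf{h}_{bg}^{(l+1)}\ \ge\ \mathbf{u}^\top\mathbf{h}_{bg}^{(l)}+|\mathcal{R}|\,a\,c-\epsilon .
\]
Telescoping from layer $l_0$ to layer $L$ yields
\[
\|\mathbf{h}_{bg}^{(L)}\|_2\ \ge\ \mathbf{u}^\top\mathbf{h}_{bg}^{(L)}\ \ge\ \mathbf{u}^\top\mathbf{h}_{bg}^{(l_0)}+(L-l_0)\bigl(|\mathcal{R}|ac-\epsilon\bigr).
\]
This is linear growth in depth. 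Second, bound content tokens from above using (A3): $\|\mathbf{h}_j^{(L)}\|\le\|\mathbf{h}_j^{(l_0)}\|+(L-l_0)\epsilon$. Comparing the two rates, once $|\mathcal{R}|ac\gg 2\epsilon$ the ratio $\|\mathbf{h}_{bg}\|/\max_j\|\mathbf{h}_j\|$ grows with depth and exceeds any prescribed constant after finitely many layers. This gives the claimed $\gg$ in a quantitative form.

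The main obstacle is keeping (A1) self-consistent. As $\mathbf{h}_{bg}$ grows along $\mathbf{u}$, its pre-activation $\mathbf{w}_{in}^{r\top}\mathbf{h}_{bg}$ must stay positive; otherwise the feedback loop could switch off. I would try to close the loop with one extra assumption, $\mathbf{w}_{in}^{r\top}\mathbf{u}\ge 0$, i.e. that register neurons read the direction they write. Then a simple induction shows the pre-activations are nondecreasing, so (A1) propagates from layer $l_0$ onward.

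The pre-MLP LayerNorm is the subtle part, since it normalizes the input. I would handle it by noting that LayerNorm preserves direction. The pre-activation therefore stays at least $m$ after normalization, while the residual addition itself is unnormalized, so the norm still accumulates. Because every premise is empirical, the conclusion should be read as conditional on (A1)–(A4) rather than as an unconditional theorem.
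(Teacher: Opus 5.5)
The paper gives no proof of this proposition. It is asserted as a summary of the empirical findings in the cited register papers, and then used as the premise ($\lambda\gg1$) of the softmax-saturation theorem. Making (A1)--(A4) explicit and reading the conclusion conditionally is therefore the right framing, and your projection-and-telescoping lower bound on $\|\mathbf{h}_{bg}^{(L)}\|_2$ is correct.

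The comparison step, however, is wrong. Both of your bounds are linear in the depth $n=L-l_0$, so the most your assumptions give is
\[
\frac{\|\mathbf{h}_{bg}^{(L)}\|_2}{\max_j\|\mathbf{h}_j^{(L)}\|_2}\ \ge\ \frac{\mathbf{u}^\top\mathbf{h}_{bg}^{(l_0)}+n\bigl(|\mathcal{R}|ac-\epsilon\bigr)}{\max_j\|\mathbf{h}_j^{(l_0)}\|_2+n\epsilon}\ \longrightarrow\ \frac{|\mathcal{R}|ac-\epsilon}{\epsilon}\quad (n\to\infty).
\]
This limit is a finite constant, so the ratio does not exceed every prescribed constant. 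The condition $|\mathcal{R}|ac>2\epsilon$ makes the additive gap $\|\mathbf{h}_{bg}\|_2-\max_j\|\mathbf{h}_j\|_2$ diverge, not the ratio.

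This is not slack in the bound. Take all register writes equal to $c\,\mathbf{u}$ with constant activations, and content updates of norm exactly $\epsilon$ in a fixed direction. This satisfies (A1)--(A4), yet the ratio stays bounded. Hence the ``$\gg$'' you derive is just the assumed rate gap $|\mathcal{R}|ac\gg\epsilon$ restated. Depth accumulation, which the proposition names as the mechanism, only washes out initial conditions.

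To make depth the actual source of the separation, you need a stronger hypothesis on content tokens. One option is uniformly bounded residual norms, e.g.\ summable increments $\sum_l\epsilon_l<\infty$; then the ratio grows linearly in $n$. The parenthetical in (A3) does not supply this, since pre-LN normalizes sublayer inputs, not the residual stream.

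The self-consistency step also fails as written. In the unnormalized recursion,
\[
\mathbf{w}_{in}^{r\top}\mathbf{h}_{bg}^{(l+1)}-\mathbf{w}_{in}^{r\top}\mathbf{h}_{bg}^{(l)}=\sum_{r'}\phi_{r'}\,\mathbf{w}_{in}^{r\top}\mathbf{w}_{out}^{r'}+\mathbf{w}_{in}^{r\top}\mathbf{e}^{(l)}.
\]
Your assumption $\mathbf{w}_{in}^{r\top}\mathbf{u}\ge0$ controls only the $\mathbf{u}$-component of each $\mathbf{w}_{out}^{r'}$, because (A2) leaves the orthogonal components free. The error term can also subtract up to $\|\mathbf{w}_{in}^r\|_2\,\epsilon$ per layer. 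So monotonicity of the pre-activations does not follow.

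With the pre-MLP LayerNorm it is worse. LayerNorm is not direction-preserving, because of mean-centering and the elementwise gain and bias. More importantly it is scale-invariant, so a lower bound $m$ on $\mathbf{w}_{in}^{r\top}\mathbf{h}_{bg}$ says nothing about $\mathbf{w}_{in}^{r\top}\mathrm{LN}(\mathbf{h}_{bg})$. As $\mathbf{h}_{bg}$ grows along $\mathbf{u}$, $\mathrm{LN}(\mathbf{h}_{bg})\to\mathrm{LN}(\mathbf{u})$, so the pre-activation tends to $\mathbf{w}_{in}^{r\top}\mathrm{LN}(\mathbf{u})$. Your assumption allows this to be zero or negative.

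What actually closes the loop is a strict hypothesis: $\mathbf{w}_{in}^{r\top}\mathrm{LN}(\mathbf{u})\ge m'>0$, with the off-$\mathbf{u}$ part of $\mathbf{h}_{bg}$ small relative to its $\mathbf{u}$-component. Alternatively, simply impose (A1) at every layer, which is what the cited empirical work reports. Note also that LayerNorm bounds the register activations from above. So the per-layer write is bounded and growth is at most linear, which again shows the norm separation must come from an assumed gap in write magnitudes, not from depth alone.
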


\subsubsection{VSB Formation via Softmax Saturation}
\begin{theorem}[VSB Formation via Softmax Saturation]
\label{thm:softmax_saturation}
Assume the projector $\mathcal{P}$ preserves the relative norm order of features. If a VSB token $\mathbf{k}_{vsb}$ exists and the textual query $\mathbf{q}_t$ exhibits a non-negative correlation with its direction $\mathbf{u}$ (i.e., $\mathbf{q}_t^\top \mathbf{u} > 0$), then the Softmax distribution enters a saturation regime:
\begin{equation*}
    A_{t, vsb} \approx 1, \quad A_{t, content} \approx 0
\end{equation*}
\end{theorem}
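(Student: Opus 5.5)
We argue directly from the definition $A_{t,j} = \exp(s_j)/\sum_{i}\exp(s_i)$ with logits $s_j = \mathbf{q}_t^\top \mathbf{k}_j/\sqrt{d}$. The key quantity is the logit gap $\Delta = s_{vsb} - \max_{j\in\text{content}} s_j$. First we show that the norm inflation of Proposition~\ref{prop:norm_inflation}, transported through the projector $\mathcal{P}$, makes $\Delta$ large. Then we convert a large gap into saturation through an elementary softmax bound.

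\textbf{Step 1 (norm transfer).} Write $\mathbf{k}_{vsb} = \|\mathbf{k}_{vsb}\|_2\,\mathbf{u}$ with $\|\mathbf{u}\|_2=1$. Since $\mathcal{P}$ preserves relative norm order, Proposition~\ref{prop:norm_inflation} gives $\|\mathbf{k}_{vsb}\|_2 = \rho\, K_c$. Here $K_c := \max_{j\in\text{content}}\|\mathbf{k}_j\|_2$ and $\rho \gg 1$ is the inflation ratio. To make this quantitative, we will read the order-preservation assumption as a bi-Lipschitz-type condition on $\mathcal{P}$, so that a large ratio of ViT norms yields a comparably large ratio of key norms.

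\textbf{Step 2 (logit gap).} Let $c := \mathbf{q}_t^\top\mathbf{u} > 0$ by hypothesis. Then $s_{vsb} = \rho K_c c/\sqrt{d}$. By Cauchy--Schwarz, $s_j \le \|\mathbf{q}_t\|_2 K_c/\sqrt{d}$ for every content token $j$. Hence
\begin{equation*}
\Delta \ge \frac{K_c}{\sqrt{d}}\bigl(\rho\, c - \|\mathbf{q}_t\|_2\bigr),
\end{equation*}
which grows linearly in $\rho$ once $\rho > \|\mathbf{q}_t\|_2/c$.

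\textbf{Step 3 (saturation).} With $N$ visual tokens,
\begin{equation*}
1 - A_{t,vsb} = \frac{\sum_{j\neq vsb} e^{s_j}}{\sum_i e^{s_i}} \le (N-1)e^{-\Delta}.
\end{equation*}
Each content weight satisfies $A_{t,j} \le e^{-\Delta}$. Therefore $A_{t,vsb}\to 1$ and $A_{t,content}\to 0$ exponentially in $\Delta$, which is the claimed regime once $\Delta \gg \log N$.

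\textbf{Main obstacle.} The hard step is Step 2, specifically the alignment constant $c$. The statement only assumes $c>0$, so $c$ could be tiny, and the gap is then controlled by the product $\rho c$. There is also a subtlety: Cauchy--Schwarz bounds the content logits by $\|\mathbf{q}_t\|_2$ while the VSB logit uses only $c$. We therefore need $\rho c$ to dominate $\|\mathbf{q}_t\|_2$ rather than merely be positive. We plan to state this as the explicit quantitative condition $\rho\,c \ge \|\mathbf{q}_t\|_2 + \sqrt{d}\,(\log N + \log(1/\varepsilon))/K_c$, which gives $A_{t,vsb}\ge 1-\varepsilon$. We will then argue that the cumulative, depth-increasing inflation of Proposition~\ref{prop:norm_inflation} makes $\rho$ large enough to meet it. The ``$\approx$'' in the theorem is read in exactly this sense.
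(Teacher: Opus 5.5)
Your proposal is correct and follows essentially the same route as the paper's proof. Both decompose $\mathbf{k}_{vsb}=\lambda\mathbf{u}$, let the inflated norm push $s_{vsb}=\lambda(\mathbf{q}_t^\top\mathbf{u})/\sqrt{d}$ far above every content logit, and then let the terms $e^{-(s_{vsb}-s_j)}$ in the softmax denominator vanish. The paper simply asserts $s_{vsb}\gg s_j$ from the absolute statement $\lambda\gg 1$, so your use of the relative ratio $\rho$, the Cauchy--Schwarz bound and the explicit threshold on $\rho c$ just make precise the quantitative condition that the paper's argument leaves implicit.
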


\begin{proof}
\textbf{Step 1: Decomposition.} Decompose the projected VSB key vector into magnitude and direction: $\mathbf{k}_{vsb} = \lambda \cdot \mathbf{u}$, where $\lambda = \|\mathbf{k}_{vsb}\|_2 \gg 1$ (derived from Proposition~\ref{prop:norm_inflation}).

\textbf{Step 2: Derivation.} The attention score is calculated as $s_{vsb} = \lambda (\mathbf{q}_t^\top \mathbf{u}) / \sqrt{d}$. Given $\lambda \gg 1$, the massive norm factor acts as a multiplicative amplifier, ensuring $s_{vsb} \gg s_j$ for any content token $j$. Substituting this into the Softmax function:
\begin{equation*}
    A_{t, vsb} = \frac{e^{s_{vsb}}}{e^{s_{vsb}} + \sum_{j \neq vsb} e^{s_j}} = \frac{1}{1 + \sum_{j \neq vsb} e^{-(s_{vsb} - s_j)}}
\end{equation*}
Since $s_{vsb} - s_j$ is a large positive number, the term $e^{-(s_{vsb} - s_j)}$ decays exponentially to zero. Thus, the denominator approaches 1, proving that $A_{t, vsb} \to 1$.
\end{proof}

\begin{corollary}[Latent Buffering]
\label{corr:latent_buffering}
The "structural silencing" described above does not imply the loss of visual information. Conversely, the context vector $\mathbf{c}_t$ effectively buffers visual semantics:
\begin{equation*}
    \mathbf{c}_t \approx \mathbf{v}_{vsb} \approx \mathcal{V}_{global}(\mathbf{H}) + \mathbf{n}_{bias}
\end{equation*}
Here, $\mathbf{n}_{bias}$ corresponds to the high-norm linguistic inertia discussed in Section~\ref{sec:preliminaries}. This mathematically validates the VSB structure.
\end{corollary}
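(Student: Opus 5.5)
The plan is to read the Corollary as a direct consequence of Theorem~\ref{thm:softmax_saturation}. We first bound how far $\mathbf{c}_t$ is from the VSB value vector. Then we argue separately that $\mathbf{v}_{vsb}$ itself carries an image-dependent component plus a large image-independent one.

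\textbf{Step 1 (context vector collapses onto the VSB value).} Write $\mathbf{c}_t = A_{t,vsb}\mathbf{v}_{vsb} + \sum_{j\neq vsb} A_{t,j}\mathbf{v}_j$. Put $\epsilon := \sum_{j\neq vsb} A_{t,j} = 1 - A_{t,vsb}$. The identity in the proof of Theorem~\ref{thm:softmax_saturation} gives
\begin{equation*}
\epsilon \le \sum_{j\neq vsb} e^{-(s_{vsb}-s_j)} \le (N-1)\, e^{-\Delta},
\qquad \Delta := \min_j (s_{vsb}-s_j).
\end{equation*}
From this, $\|\mathbf{c}_t - \mathbf{v}_{vsb}\|_2 \le \epsilon\,\bigl(\|\mathbf{v}_{vsb}\|_2 + \max_j \|\mathbf{v}_j\|_2\bigr)$. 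This vanishes exponentially in the norm gap $\lambda$. It justifies the first approximation $\mathbf{c}_t \approx \mathbf{v}_{vsb}$, with an explicit error rather than a heuristic one.

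\textbf{Step 2 (decompose the VSB value).} Unroll the residual recursion of Proposition~\ref{prop:norm_inflation} from the input layer to the final ViT layer, giving $\mathbf{h}_{bg}^{(L)} = \mathbf{h}_{bg}^{(0)} + \sum_l \Delta^{(l)}$. Split each update $\Delta^{(l)}$ into two parts. The register-neuron contribution $\sum_{r}\phi(\cdot)\mathbf{w}^r_{out}$ lies in a fixed span of output weights, independent of image content to leading order; this is what we call $\mathbf{n}_{bias}$. The remaining contribution comes from attention over all patches in the ViT layers, which aggregates information from every $\mathbf{h}_j$; define this as $\mathcal{V}_{global}(\mathbf{H})$. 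Pushing both through the projector $\mathcal{P}$ and the value map $W_V$ gives $\mathbf{v}_{vsb} = W_V\mathcal{P}(\mathcal{V}_{global}+\mathbf{n}_{bias})$. If $\mathcal{P}$ is affine (as in LLaVA's linear or MLP-near-linear projector), linearity preserves the sum. The high norm of $\mathbf{n}_{bias}$ then matches the "silencing" interpretation.

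\textbf{Main obstacle.} The hard part is Step 2, showing that $\mathcal{V}_{global}$ is genuinely nonzero and image-dependent. Otherwise the corollary reduces to the trivial statement $\mathbf{c}_t\approx\mathbf{n}_{bias}$. First I would argue via self-attention inside the ViT: background tokens attend to content tokens with weights bounded below, before their norms saturate in early layers. This makes their residual stream a nondegenerate function of $\mathbf{H}$. The Jacobian $\partial \mathbf{v}_{vsb}/\partial \mathbf{H}$ is then nonzero, giving a first-order image-dependent term. If a clean bound is unavailable, I would state this as an assumption: the register subspace is approximately orthogonal to the content subspace, so the projection of $\mathbf{v}_{vsb}$ off the register span defines $\mathcal{V}_{global}$. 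Under that assumption the decomposition holds by construction, and Step 1 transfers it to $\mathbf{c}_t$.
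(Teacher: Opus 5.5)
Your Step 1 is correct, and it is all the paper actually relies on. The corollary is stated without proof. The first $\approx$ is read off from $A_{t,vsb}\approx 1$ in Theorem~\ref{thm:softmax_saturation}, and the second is simply asserted as the reading of Proposition~\ref{prop:norm_inflation}. Your explicit bound sharpens the first part. For the buffering claim, the relevant comparison is the leakage $\sum_{j\neq vsb}A_{t,j}\mathbf{v}_j$ against $\|\mathcal{V}_{global}\|_2$, since the $-\epsilon\mathbf{v}_{vsb}$ piece only rescales both components.

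Step 2 attempts what the paper never does, and it has a genuine gap. The identity $\mathbf{v}_{vsb}=W_V\mathcal{P}(\mathcal{V}_{global}+\mathbf{n}_{bias})$ skips the language model. The attention producing $\mathbf{c}_t$ happens at some decoder layer $l$, where $\mathbf{v}_{vsb}$ is $W_V$ applied to the RMS-normalized hidden state entering that layer, and that state has passed through $l-1$ nonlinear decoder blocks. Moreover, LLaVA-1.5's projector is a two-layer GELU MLP, not affine, and InstructBLIP's Q-Former has no patch-to-token correspondence at all. So ``linearity preserves the sum'' fails exactly where you need it. The normalization is linear for a fixed token, but it divides out the very norm you appeal to. To keep additivity you would have to decompose the decoder residual stream of the sink token itself, and then $\mathbf{n}_{bias}$ is no longer your ViT register term. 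Even inside the ViT, only the directions $\mathbf{w}^r_{out}$ are image-independent; the coefficients $\phi(\mathbf{w}_{in}^{r\top}\mathbf{h}_{bg}^{(l)})$ vary with the image.

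Your fallback does not close the gap. Splitting $\mathbf{v}_{vsb}$ into its register-span projection and the orthogonal remainder works for every vector, so a decomposition that holds ``by construction'' says nothing about buffering. The entire content of the corollary is the nondegeneracy you yourself identify as the main obstacle: that the off-register part depends nontrivially on $I$ and is not swamped by the leakage. Proposition~\ref{prop:norm_inflation} gives no lower bound on this part; it only says the register part dominates. Your Jacobian sketch rests on an unproven claim about ViT attention weights.

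The last clause of the statement is also left unaddressed. Your $\mathbf{n}_{bias}$ comes from a vision encoder that never sees text. In fact, no component of $\mathbf{v}_{vsb}$ can carry generation-time narrative inertia. In all four evaluated models the visual tokens precede the response, so under causal attention $\mathbf{v}_{vsb}$ is independent of $t$ and of the generated prefix. The ``linguistic inertia'' reading can therefore only describe how a fixed offset is read by $\mathbf{q}_t$ downstream. It is not something a decomposition of $\mathbf{v}_{vsb}$ can deliver.

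The defensible version of your proposal, which is no weaker than what the paper itself offers, has three parts. State Step 1 as a lemma. State Step 2 openly as a modeling assumption. Flag the identification of $\mathbf{n}_{bias}$ as linguistic as interpretive rather than proved.
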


\subsection{Decoding as a Game: Dynamic Game-Theoretic Modeling}
\label{app:game_modeling}

Based on the VSB theory, we mathematically model the ACE strategy.

\subsubsection{Game Setup: Asymmetric Competition}

We formalize decoding as a Stackelberg Game between two agents. Note that we adopt this hierarchical formulation primarily to theoretically model the dominance of linguistic priors; for inference efficiency, ACE approximates the equilibrium via dynamic re-weighting rather than explicit iterative bi-level optimization.

\begin{itemize}
    \item \textbf{Language Agent (Agent L):} Maximizes likelihood given priors $\mathbf{n}_{bias}$. Utility: $U_L(y) = \log P(y \mid \mathbf{n}_{bias})$.
    \item \textbf{Visual Agent (Agent V):} Maximizes likelihood given visual information $\mathcal{V}_{global}$. Utility: $U_V(y) = \log P(y \mid \mathcal{V}_{global})$.
\end{itemize}
In standard decoding, the system converges to a False Equilibrium where $U_{total}(y) \approx U_L(y)$, leading to $P(y_{hall}) > P(y_{fact})$.

\subsubsection{ACE Equilibrium Theorem}
\begin{theorem}[Adversarial counter-commonsense Equilibrium]
\label{thm:ace_equilibrium}
The ACE decoding strategy is mathematically equivalent to solving for the optimal token $y^*$ that maximizes the modified utility function:
\begin{equation*}
    y^*_{ACE} = \arg\max_{y \in \mathcal{V}} \left[ \log P(y|I) - \lambda_{dyn} \log P(y|I_{cf}) + \beta \log P(y|\mathbf{V}_{iso}) \right]
\end{equation*}
This strategy guarantees a shift to an equilibrium where $U_{ACE}(y_{fact}) > U_{ACE}(y_{hall})$.
\end{theorem}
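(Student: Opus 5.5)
The plan has two parts, matching the two claims of Theorem~\ref{thm:ace_equilibrium}: first the ``mathematical equivalence'' between the logit rule $\mathcal{L}_{final}$ and the log-probability utility, then the equilibrium shift $U_{ACE}(y_{fact}) > U_{ACE}(y_{hall})$. The second claim cannot hold unconditionally, so I will prove it under explicit sufficient conditions.

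\textbf{Step 1 (equivalence).} For each stream $s\in\{raw, cf, iso\}$ write $\log P(y\mid s) = \mathcal{L}_s(y) - \log Z_s$, where $Z_s=\sum_{y'} e^{\mathcal{L}_s(y')}$ is the partition function of that stream. This gives
\begin{equation*}
\mathcal{L}_{final}(y) = \log P(y\mid I) - \lambda_{dyn}\log P(y\mid I_{cf}) + \beta\log P(y\mid \mathbf{V}_{iso}) + C_t ,
\end{equation*}
where $C_t=\log Z_{raw}-\lambda_{dyn}\log Z_{cf}+\beta\log Z_{iso}$. The key observation is that $\lambda_{dyn}(t)$ depends on $\mathrm{JSD}(P_{raw}\|P_{cf})$, which is computed once per decoding step and is shared by all candidate tokens. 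Hence $C_t$ is constant in $y$, and $\arg\max_y \mathcal{L}_{final}=\arg\max_y[\cdot]$. This settles the equivalence claim for greedy selection; for sampling, softmax is likewise invariant to adding $C_t$.

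\textbf{Step 2 (decomposition into agent utilities).} Using Corollary~\ref{corr:latent_buffering}, $\mathbf{c}_t\approx\mathcal{V}_{global}+\mathbf{n}_{bias}$, I will adopt a product-of-experts (log-additive) model:
\begin{itemize}
\item $\log P(y\mid I)\approx U_L(y)+U_V(y)$;
\item $\log P(y\mid I_{cf})\approx U_L(y)+U_V^{cf}(y)$, since the patch is pasted only into $\mathcal{M}_{noise}$ and so leaves the narrative prior $\mathbf{n}_{bias}$ invariant;
\item $\log P(y\mid\mathbf{V}_{iso})\approx U_V(y)$, since the gate $\mathbf{M}$ removes drifting tokens.
\end{itemize}
Substituting these gives
\begin{equation*}
U_{ACE}(y)=(1-\lambda_{dyn})\,U_L(y)+(1+\beta)\,U_V(y)-\lambda_{dyn}\,U_V^{cf}(y).
\end{equation*}

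\textbf{Step 3 (margin comparison).} Encode the False Equilibrium by the margins
\begin{equation*}
\Delta_L=U_L(y_{hall})-U_L(y_{fact})>0,\qquad \Delta_V=U_V(y_{fact})-U_V(y_{hall})>0,\qquad \Delta_L>\Delta_V .
\end{equation*}
Let $\Delta_V^{cf}=U_V^{cf}(y_{fact})-U_V^{cf}(y_{hall})$. Then
\begin{equation*}
U_{ACE}(y_{fact})-U_{ACE}(y_{hall})=(1+\beta)\Delta_V-\lambda_{dyn}\Delta_V^{cf}-(1-\lambda_{dyn})\Delta_L .
\end{equation*}

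\textbf{Step 4 (low-JSD regime).} In the hallucination regime the JSD is small, so $\lambda_{dyn}\to\lambda_{base}=1$ and the $\Delta_L$ term cancels. What remains is the condition $(1+\beta)\Delta_V>\Delta_V^{cf}$. I will argue that the subject region is preserved by the mask, so $\Delta_V^{cf}\le\Delta_V$: the adversarial patch can only dilute, not strengthen, evidence for $y_{fact}$. The condition then holds for every $\beta>0$. For general $\lambda_{dyn}<1$, I will state the sufficient condition $(1-\lambda_{dyn})\Delta_L<(1+\beta)\Delta_V-\lambda_{dyn}\Delta_V^{cf}$.

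\textbf{Main obstacle.} The difficult step is Step 2. Justifying that the prior term is exactly shared between $I$ and $I_{cf}$, and that $\mathbf{V}_{iso}$ carries no residual $U_L$, is a modeling assumption rather than a consequence of the earlier results. I would therefore state these as explicit hypotheses with error terms $\epsilon$, and carry the resulting $O(\epsilon)$ slack through the margin inequality. On that reading, the ``guarantee'' in the theorem is conditional on those hypotheses and on $\Delta_V$ exceeding the accumulated error.
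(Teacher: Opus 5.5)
Your proposal is sound as a conditional argument, and it takes a genuinely different route from the paper.

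\textbf{Comparison with the paper's proof.} The paper uses a two-case analysis that effectively evaluates $\lambda_{dyn}$ separately for the two candidates. For $y_{hall}$ it argues $P(y\mid I)\approx P(y\mid I_{cf})$, hence $\mathrm{JSD}\to 0$ and a ``maximal penalty.'' For $y_{fact}$ it argues $\mathrm{JSD}>0$, hence $\lambda_{dyn}\to 0$ plus the $\beta$ reward. It then simply combines the two. The equivalence half is asserted, not proved.

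You instead take $\lambda_{dyn}$ to be what the algorithm computes: one scalar per step from $\mathrm{JSD}(\sigma(\mathcal{L}_{raw})\,\|\,\sigma(\mathcal{L}_{cf}))$. This choice is what makes the equivalence true; a token-dependent $\lambda(y)$ would leave the $y$-dependent term $\lambda(y)\log Z_{cf}$ inside your $C_t$. It also forces $y_{fact}$ and $y_{hall}$ to be compared under a common $\lambda_{dyn}$, which is why you get a margin identity rather than two cases.

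This is more than a stylistic difference:
\begin{itemize}
\item The paper's two cases cannot both hold at one decoding step.
\item Even under its per-token reading the sign goes the wrong way. Since $-\lambda\log P(y\mid I_{cf})\ge 0$, raising $\lambda$ on $y_{hall}$ alone would raise its utility.
\item Subtraction acts as a penalty only relative to other tokens under a shared coefficient, and your Step 3 identity is the precise form of that.
\end{itemize}
The paper's version conveys the design intuition (perturbation-insensitive versus perturbation-sensitive tokens) in a few lines. Yours is the one that applies to the rule as defined.

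\textbf{Where to tighten.} In the low-JSD regime you emphasize, you do not need the product-of-experts model, and dropping it shows where the content lies. If $\lambda_{base}=1$ and the JSD is small, then $\mathcal{L}_{raw}-\lambda_{dyn}\mathcal{L}_{cf}$ is nearly constant on tokens of non-negligible mass. The error is controlled by the JSD, since $1-\lambda_{dyn}=1-e^{-\gamma\,\mathrm{JSD}}\le\gamma\,\mathrm{JSD}$. There, $\mathcal{L}_{final}\approx\beta\,\mathcal{L}_{iso}+\text{const}$, so ACE's preference between $y_{fact}$ and $y_{hall}$ is, up to that error, the DVS stream's own preference.

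\begin{itemize}
\item \emph{The DVS hypothesis is the crux, not a side condition.} Your hypothesis $\log P(y\mid\mathbf{V}_{iso})\approx U_V(y)$ is not an $O(\epsilon)$ slack but the entire content of the guarantee. It is hard to defend: the DVS stream runs the same decoder on the same prompt $\mathbf{x}$ and history, so the parametric prior passes through it.
\item \emph{Quantitatively.} In your notation, suppose $\log P(y\mid\mathbf{V}_{iso})\approx\rho\,U_L(y)+U_V^{iso}(y)$. Then the margin at $\lambda_{dyn}=1$ becomes $\beta(\Delta_V^{iso}-\rho\,\Delta_L)$, with $\Delta_V^{iso}=U_V^{iso}(y_{fact})-U_V^{iso}(y_{hall})$. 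So you need $\rho<\Delta_V^{iso}/\Delta_L$, and your own false-equilibrium condition $\Delta_L>\Delta_V$ makes this a real restriction.
\item \emph{The paper's VSB picture does not obviously rescue it.} There, $\mathbf{n}_{bias}$ rides on background sink tokens that the gate removes. But the patches are pasted in the background, and the paper itself reports that the sinks reorganize under the perturbation. That undercuts your other assumption, that $I$ and $I_{cf}$ share $U_L$.
\item \emph{The dilution claim is unnecessary.} You do not need, and cannot derive, the claim that the patch ``can only dilute'' evidence for $y_{fact}$. Small JSD already gives $\Delta_V^{cf}\approx\Delta_V$ for tokens of non-negligible mass.
\end{itemize}

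So state the conclusion as follows. In the low-JSD regime, ACE prefers $y_{fact}$ if and only if the DVS stream does, up to the JSD error. For intermediate $\lambda_{dyn}$, your explicit margin inequality is the condition.
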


\begin{proof}
We analyze the utility for two distinct cases:
\begin{enumerate}
    \item \textbf{Case 1: Hallucinated Token ($y_{hall}$).} Since $y_{hall}$ is driven by Agent L, it is insensitive to visual perturbations. Thus, $P(y|I) \approx P(y|I_{cf})$, leading to $\text{JSD} \to 0$ and $\lambda_{dyn} \to \text{Max}$. The term $-\lambda_{dyn} \log P(y|I_{cf})$ imposes a maximal penalty, significantly reducing the total utility.
    \item \textbf{Case 2: Factual Token ($y_{fact}$).} Since $y_{fact}$ is grounded in vision, it is sensitive to perturbations, so $\text{JSD} > 0$ and $\lambda_{dyn} \to 0$. The penalty is minimized. Simultaneously, $y_{fact}$ aligns with the rigid visual stream $\mathbf{V}_{iso}$, triggering the reward term $+\beta \log P(y|\mathbf{V}_{iso})$.
\end{enumerate}
Combining these effects, the net utility of the factual token exceeds that of the hallucinated token, proving the validity of ACE.
\end{proof}

\subsection{Construction of Adversarial Object Library}
\label{app:library_construction}

To physically implement the CIS module, we require a robust source of counter-commonsense perturbations. Here we detail the construction and statistical composition of the adversarial library $\mathcal{B}$.

\begin{table}[h]
\centering
\caption{Composition of the Adversarial Object Library $\mathcal{B}$ ($K=500$).}
\label{tab:library_stats}
\begin{tabular}{lcl}
\hline
\textbf{Domain} & \textbf{Ratio} & \textbf{Representative Sub-categories} \\ \hline
Natural Organisms & 40\% & Marine life, Arctic mammals, Tropical birds, Reptiles \\
Man-made Vehicles & 35\% & Spacecraft, Historical locomotives, Supercars, Aircraft \\
Domestic/Industrial & 25\% & Household appliances, Industrial tools, Furniture \\ \hline
\end{tabular}
\end{table}

\paragraph{Generative Synthesis Strategy.}
Instead of retrieving images from existing datasets, we utilize \textbf{Hyper-SDXL}~\cite{ren2024hyper} to generate $K=500$ independent objects. We selected this model for its superior balance between semantic alignment and inference efficiency, ensuring the generation of high-fidelity objects. To further isolate pure semantics and facilitate seamless physical overlay, we implement a two-stage pipeline: (1) \textbf{Standardized Synthesis:} Generating objects using the prompt \textbf{\textit{``A [object], isolated on a pure white background''}}; (2) \textbf{Automated Matting:} Applying \textbf{RMBG-2.0} to remove the background and extract the alpha-channel foreground. This ensures that the cached feature vectors $\mathbf{v}_k$ are entirely free from background contamination and that CIS interventions do not introduce artificial rectangular boundary noise.

\paragraph{Category Distribution and Composition.}
The library is meticulously curated to span $15$ diverse sub-categories, grouped into three primary orthogonal domains. As shown in Table~\ref{tab:library_stats}, the distribution ensures that the ``farthest neighbor'' algorithm can consistently retrieve a semantically contradictory object for any given scene. Representative samples from each primary domain are visualized in Figure~\ref{figapp0}, demonstrating the high-fidelity and diverse semantic nature of the generated adversarial objects.

\begin{figure*}[t]
  \vskip 0.2in
  \begin{center}
    % width=\textwidth 适配双栏总宽度（替代原单栏的 \columnwidth）
    % \centerline{\includegraphics[width=\textwidth]{icml_numpapers}}
    \includegraphics[width=1\textwidth]{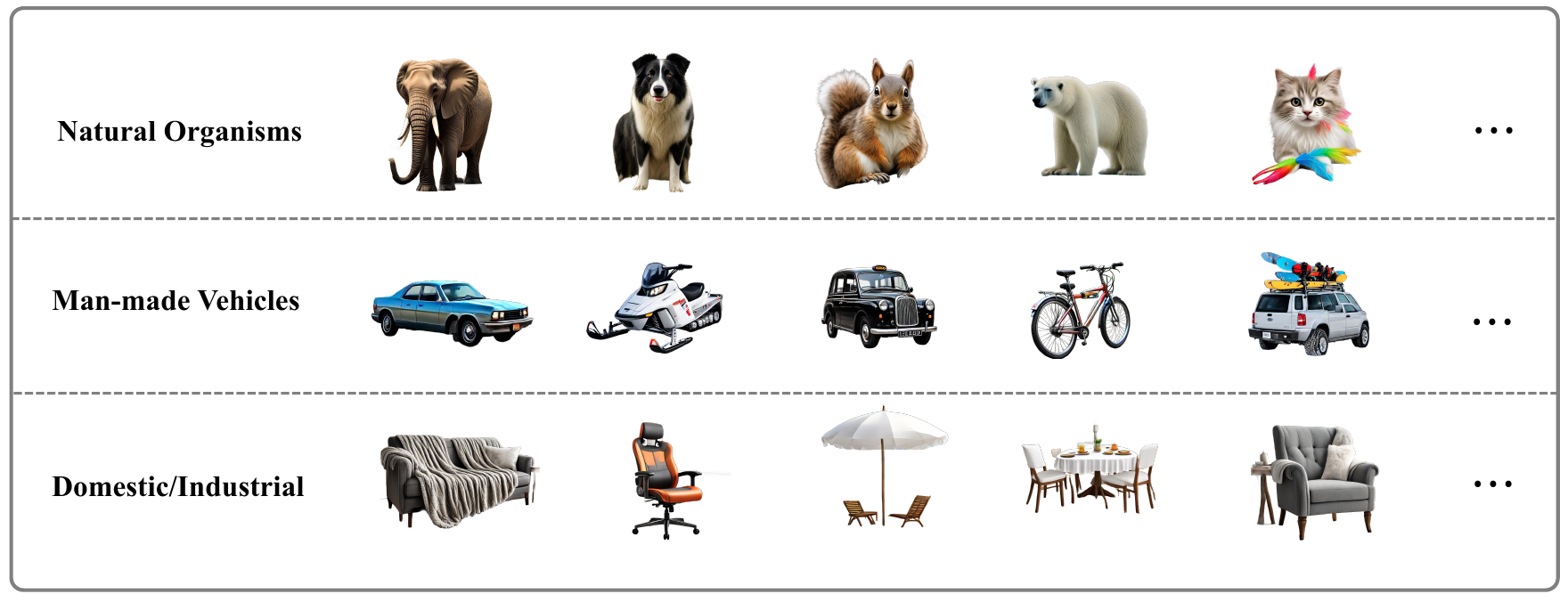} 
    \caption{Partial samples from each domain.}

    \label{figapp0}
  \end{center}
  \vskip -0.2in % 可选：微调图片与下文的间距
\end{figure*}

\paragraph{Orthogonality and Semantic Distance.}
The selection of these categories ensures maximal semantic distance from common contexts. For instance:
\begin{itemize}
    \item \textbf{Natural Organisms:} Facilitate geographic contradictions (e.g., a ``polar bear'' appearing in a tropical rainforest or desert scene).
    \item \textbf{Man-made Vehicles:} Facilitate era or environment contradictions (e.g., an ``old steam train'' in a futuristic cityscape or an ``underwater spaceship'').
    \item \textbf{Household Appliances:} Facilitate scale and domain contradictions (e.g., a ``refrigerator'' appearing in a wild mountain landscape).
\end{itemize}

\paragraph{Quality Control and Feature Caching.}
Post-generation, we applied a CLIP-based filtering mechanism to prune samples where the image-text alignment score fell below $0.25$, ensuring that every object in $\mathcal{B}$ is a canonical representation of its class. To accelerate inference, we pre-compute and cache the feature vectors $\{\mathbf{v}_k\}_{k=1}^K$ using the visual encoder (e.g., CLIP-ViT-L/14) of the target MLLM. This reduces the retrieval process to a single matrix multiplication against the query vector, maintaining negligible latency.

\paragraph{Ablation on the Number of Adversarial Patches ($N$).} 
We investigate the impact of the number of counter-commonsense patches ($N$) on the efficacy of the CIS using LLaVa-1.5 model. As shown in Table~\ref{tab:n_ablation}, while $N=1$ provides initial mitigation, it occasionally lacks the intensity required to destabilize deep-rooted linguistic priors. Increasing to $N=2$ achieves the optimal balance, yielding the highest \textbf{POPE Accuracy (84.65\%)} and \textbf{F1-score (82.59\%)}, while significantly suppressing hallucinations as evidenced by \textbf{CHAIR$_S$ (40.4)} and \textbf{CHAIR$_I$ (9.6)}. Further increasing $N$ beyond 2 does not yield proportional gains and may introduce excessive structural noise that interferes with the stable DVS anchor. Consequently, $N=2$ is selected as the default configuration for the ACE framework.

\begin{table}[h]
\centering
\caption{Ablation study of the number of adversarial patches ($N$) on POPE and CHAIR benchmarks.}
\label{tab:n_ablation}
\begin{tabular}{lcccc}
\hline
\textbf{Configuration} & \textbf{POPE Acc $\uparrow$} & \textbf{POPE F1 $\uparrow$} & \textbf{CHAIR$_S$ $\downarrow$} & \textbf{CHAIR$_I$ $\downarrow$} \\ \hline
Baseline (Greedy) & 80.13 & 80.90 & 48.3 & 14.1 \\
ACE ($N=1$) & 83.52 & 81.45 & 43.8 & 11.2 \\
\textbf{ACE ($N=2$)} & \textbf{84.65} & \textbf{82.59} & \textbf{40.4} & \textbf{9.6} \\
ACE ($N=3$) & 84.38 & 82.31 & 40.5 & 9.7 \\ \hline
\end{tabular}
\end{table}

\subsection{Hyperparameter Settings}
\label{app:hyperparams}

To ensure reproducibility, the specific hyperparameters for the ACE framework are set as follows:
\begin{itemize}
    \item \textbf{Adversarial Selection:} Library size $K=500$, Patch Count $N=2$.
    \item \textbf{Soft-Gating (DVS):} Temperature coefficient $\kappa=20$, Threshold $\tau=0.6$.
    \item \textbf{Intervention Ratio:} Mid-layer injection ratio $\alpha=0.3$.
    \item \textbf{Game Dynamics:} Base penalty $\lambda_{base}=1.0$, Sensitivity gain $\gamma=100$, Reward weight $\beta=0.4$.
\end{itemize}

\section{ACE Algorithm}
\label{alg:ace_framework}
\begin{algorithm}[H]
\caption{Adversarial Counter-Commonsense Equilibrium (ACE)}

\SetKwInput{KwInput}{Input}
\SetKwInput{KwParam}{Hyperparameters}
\SetKwInput{KwOutput}{Output}

\KwInput{Image $I_{raw}$, Text Instruction $\mathbf{x}$, Adversarial Library $\mathcal{B}$, MLLM $\Theta$ (Encoder $\mathcal{E}_v$, Decoder $\mathcal{D}$)}
\KwParam{Mixing ratio $\alpha$, Mask temp $\kappa$, Rigidity thresh $\tau$, Reward $\beta$, Penalty params $\lambda_{base}, \gamma$, Inject layer $l_{inj}$}
\KwOutput{Generated Token Sequence $Y$}

\BlankLine
\textbf{Phase 1: Stream Construction (Architectural Level)}\;
Extract global feature: $\mathbf{v}_{raw}, \mathbf{F}_{raw} \leftarrow \mathcal{E}_v(I_{raw})$\;

Select adversarial source via Farthest Neighbor:
$p^* \leftarrow \arg\min_{p_k \in \mathcal{B}} (\text{CosSim}(\mathbf{v}_{raw}, \mathbf{v}_k))$\;
Generate background mask $M_{bg}$ via Morphological Denoising Strategy\;
Synthesize counter-commonsense image: $I_{cf} \leftarrow \text{Paste}(I_{raw}, p^*, M_{bg})$\;
Extract CIS features: $\_, \mathbf{F}_{cf} \leftarrow \mathcal{E}_v(I_{cf})$\;

Calculate point-wise rigidity proxy: $S^{(i)} \leftarrow \text{CosSim}(\mathbf{F}_{raw}^{(i)}, \mathbf{F}_{cf}^{(i)})$\;
Generate isolation mask with global protection:
$\mathbf{M}^{(i)} \leftarrow \sigma(\kappa \cdot (S^{(i)} - \tau)); \quad \mathbf{M}^{(0)} \leftarrow 1.0$\;
Obtain Decoupled Visual Stream: $\mathbf{V}_{iso} \leftarrow \mathbf{F}_{raw} \odot \mathbf{M}$\;

\BlankLine
\textbf{Phase 2: Autoregressive Decoding via Game Dynamics}\;
Initialize history $H \leftarrow \emptyset$, time step $t \leftarrow 0$\;
\While{$y_t \neq \text{[EOS]}$}{
    $t \leftarrow t + 1$\;
    Initialize hidden states $\mathbf{h}_{raw}^{(0)}, \mathbf{h}_{cf}^{(0)}, \mathbf{h}_{iso}^{(0)}$ from embeddings\;
    
    \For{layer $l = 1$ \KwTo $L$}{
        $\mathbf{h}_{raw}^{(l)} \leftarrow \text{Block}(\mathbf{h}_{raw}^{(l-1)}, \mathbf{F}_{raw}, \mathbf{x})$\;
        $\mathbf{h}_{cf}^{(l)} \leftarrow \text{Block}(\mathbf{h}_{cf}^{(l-1)}, \mathbf{F}_{cf}, \mathbf{x})$\;
        $\mathbf{h}_{iso}^{(l)} \leftarrow \text{Block}(\mathbf{h}_{iso}^{(l-1)}, \mathbf{V}_{iso}, \mathbf{x})$\;
        
        \If{$l == l_{inj}$}{
            $\mathbf{h}_{raw}^{(l)} \leftarrow (1-\alpha) \cdot \mathbf{h}_{raw}^{(l)} + \alpha \cdot \mathbf{h}_{iso}^{(l)}$\;
        }
    }
    
    Calculate Logits: $\mathcal{L}_{raw}, \mathcal{L}_{cf}, \mathcal{L}_{iso} \leftarrow \text{Proj}(\mathbf{h}_{raw}^{(L)}), \text{Proj}(\mathbf{h}_{cf}^{(L)}), \text{Proj}(\mathbf{h}_{iso}^{(L)})$\;
    Calculate Sensitivity: $\lambda_{dyn} \leftarrow \lambda_{base} \cdot \exp(-\gamma \cdot \text{JSD}(\sigma(\mathcal{L}_{raw}) || \sigma(\mathcal{L}_{cf})))$\;
    
    Compute equilibrium logits (mixed payoff):
    $\mathcal{L}_{final} \leftarrow \mathcal{L}_{raw} - \lambda_{dyn} \cdot \mathcal{L}_{cf} + \beta \cdot \mathcal{L}_{iso}$\;
    
    Sample next token: $y_t \sim \text{Softmax}(\mathcal{L}_{final})$\;
    Update history: $H \leftarrow H \cup \{y_t\}$\;
}
\Return $Y = \{y_1, \dots, y_t\}$\;
\end{algorithm}

\section{Evaluation Metric Details}
\label{app:metric_details}

\subsection{POPE Metric Details}
The \textbf{P}olling-based \textbf{O}bject \textbf{P}robing \textbf{E}valuation (POPE) \cite{li-etal-2023-evaluating} evaluates the stability of MLLMs in object recognition by asking Yes/No questions. The evaluation consists of 500 images, with 6 questions constructed per image. 

\textbf{Evaluation Prompt.} For each query object, we format the instruction as: 
\begin{quote}
\texttt{"Is there a \textless object\textgreater in this image? Please answer yes or no."}
\end{quote}

The target objects in the prompts are selected based on three distinct sampling strategies:
\begin{itemize}
    \item \textbf{Random Split}: Negative objects are randomly sampled from the entire dataset vocabulary.
    \item \textbf{Popular Split}: Negative objects are selected from the most frequently occurring objects in the dataset, testing the model's resistance to frequency bias.
    \item \textbf{Adversarial Split}: Negative objects are selected based on co-occurrence probabilities (i.e., objects that frequently appear together with the ground-truth objects but are absent in the current image). This split poses the greatest challenge as it targets the model's reliance on language priors.
\end{itemize}

\subsection{CHAIR Metric Details}
\label{app:chair_details}
\textbf{C}aption \textbf{H}allucination \textbf{A}ssessment with \textbf{I}mage \textbf{R}elevance (CHAIR) \cite{rohrbach-etal-2018-object} is utilized to quantify object hallucinations in image captioning tasks. It compares generated descriptions against the ground-truth objects present in the image. 

\textbf{Evaluation Prompt.} Following standard protocols, we prompt the models to generate detailed descriptions using the instruction:
\begin{quote}
\texttt{"Please help me describe the image in detail."}
\end{quote}
We set the generation parameter \texttt{max\_new\_tokens} to 512 to allow for comprehensive captioning. The metric comprises two dimensions:
\begin{itemize}
    \item \textbf{Instance-level ($\text{CHAIR}_I$)}: Measures the fraction of hallucinated object instances among all mentioned objects.
    \item \textbf{Sentence-level ($\text{CHAIR}_S$)}: Measures the fraction of captions that contain at least one hallucinated object.
\end{itemize}

The formulations are defined as follows:
\begin{align*}
\text{CHAIR}_I &= \frac{|\{\text{hallucinated objects}\}|}{|\{\text{all mentioned objects}\}|}, \\
\text{CHAIR}_S &= \frac{|\{\text{captions with hallucinated objects}\}|}{|\{\text{all captions}\}|}.
\end{align*}

\subsection{MME Metric Details}
\label{app:mme_details}
The \textbf{M}LLM \textbf{E}valuation (MME) benchmark \cite{fu2025mme} is a comprehensive suite designed to measure both perceptual and cognitive capabilities of MLLMs across 14 distinct subtasks. In this work, we specifically focus on the \textbf{Hallucination Subset} to evaluate fine-grained visual consistency. This subset includes:
\begin{itemize}
    \item \textbf{Existence}: Evaluates whether the model hallucinates non-existent objects.
    \item \textbf{Count}: Tests the model's ability to accurately count object instances.
    \item \textbf{Position}: Assesses the correctness of spatial relationship descriptions.
    \item \textbf{Color}: Measures the accuracy of object attribute (color) recognition.
\end{itemize}

\begin{figure*}[t]
  \vskip -0.1in
  \begin{center}
    % width=\textwidth 适配双栏总宽度（替代原单栏的 \columnwidth）
    % \centerline{\includegraphics[width=\textwidth]{icml_numpapers}}
    \includegraphics[width=1\textwidth]{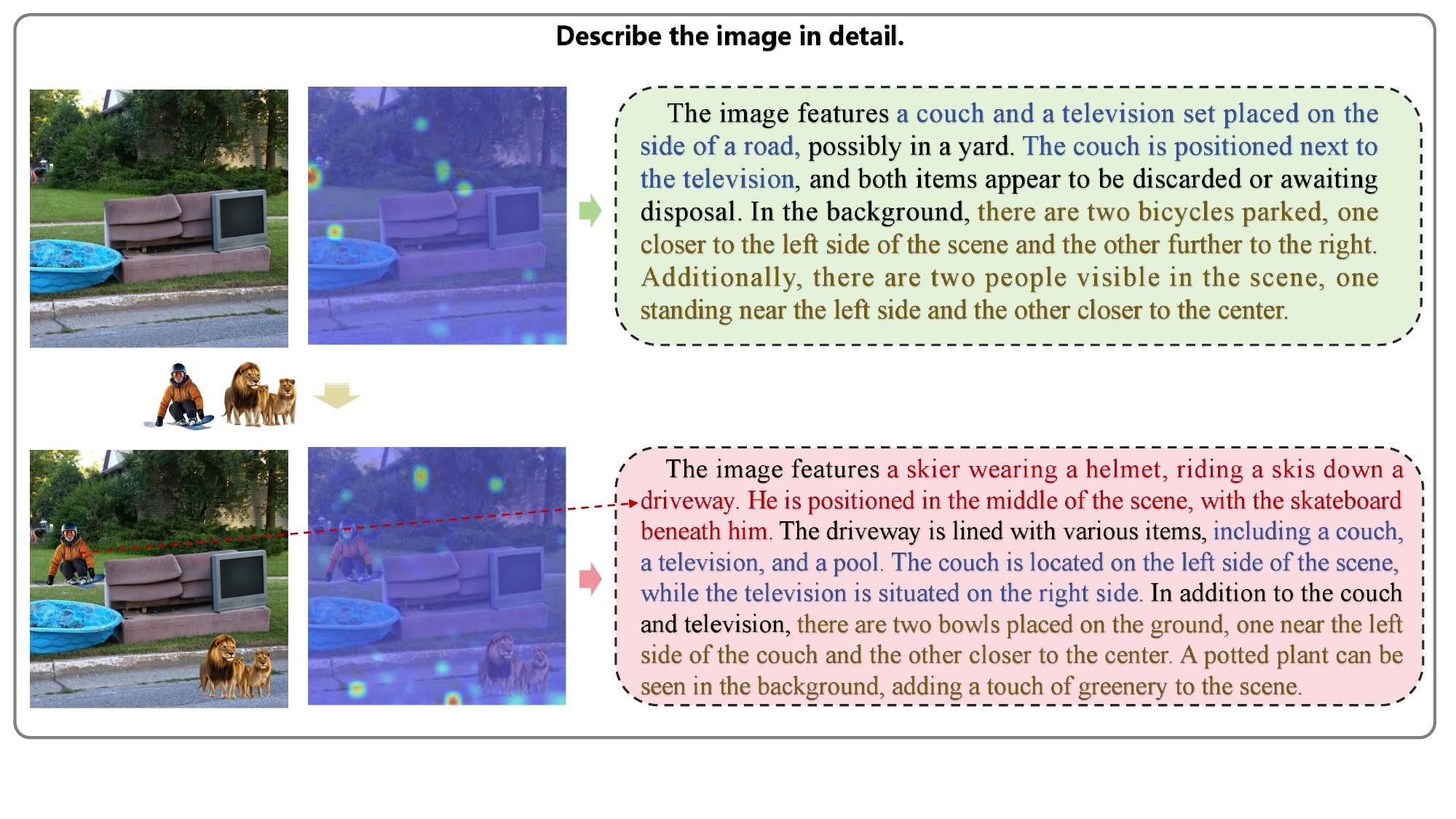} 
    \caption{\textbf{Visualizing the breakdown of False Equilibrium via ACE streams.} 
We visualize the attention heatmaps and generated captions for an original image (Left) and its counter-commonsense perturbed counterpart (Right). 
\textbf{(1) Heatmaps:} The high-attention sinks, typically aggregated on non-salient background regions, exhibit a \textbf{drastic topological reorganization} triggered by the introduction of the perturbation (the skier). This volatility physically evidences the \textbf{instability} of the VSB, proving it is a responsive buffer susceptible to adversarial intervention rather than a static sink fixed on noise.
\textbf{(2) Decoupled Text Streams:} 
\textcolor{blue}{\textbf{Blue text}} (e.g., ``couch'', ``television'') represents robust visual facts shared across interactions, corresponding to the \textbf{DVS}. 
\textcolor{red}{\textbf{Red text}} (e.g., regarding the \textbf{skier}) highlights the model's sensitivity to the \textbf{CIS}. 
Crucially, \textcolor{brown}{\textbf{Brown text}} denotes hallucinations (e.g., ``bicycles'' vs. ``bowls placed''). Their inconsistency under perturbation reveals that they stem from fragile linguistic priors rather than visual evidence, validating our strategy to disrupt the Narrative Inertia.}

    \label{figapp1}
  \end{center}
  \vskip -0.2in % 可选：微调图片与下文的间距
\end{figure*}

\section{Qualitative Visualization of VSB Instability}
\label{app:visualization}

To intuitively validate the theoretical propositions of the ACE framework, specifically the structural instability of the Visual Semantic Buffer (VSB) and the physical efficacy of our three-stream decoupling strategy, we present a detailed qualitative analysis in Figure~\ref{figapp1} using the LLaVA-NeXT model.

\paragraph{1. Heatmap Volatility Reflects Dynamic Competition.}
Standard theories often interpret high-attention tokens as static ``sinks'' that merely discard information. However, comparing the attention heatmaps of the original image ($I_{raw}$) and the counter-commonsense perturbed image ($I_{cf}$), we observe a \textbf{significant topological reorganization}. The distribution of high-attention sinks does not remain stationary; instead, it undergoes drastic shifts triggered by the introduced perturbation (the skier). This volatility confirms that the VSB is not an immutable void fixed on background noise, but a \textbf{responsive structural carrier} whose state is actively modulated by the adversarial competition. This physical responsiveness provides the prerequisite condition for our \textbf{CIS} module to effectively intervene in the decoding process.

\paragraph{2. Stream Decoupling in Text Generation.}
By color-coding the generated descriptions, we physically map the three logical streams defined in our Method (Section~\ref{sec:method}):
\begin{itemize}
    \item \textcolor{blue}{\textbf{Blue Text (The Positive Anchor / DVS):}} Descriptions such as ``couch'' and ``television'' appear consistently in both the original and perturbed captions. This phenomenon empirically validates the \textbf{Semantic Rigidity Hypothesis} central to our \textbf{DVS} module. It demonstrates that authentic visual facts ($\mathcal{V}_{global}$) maintain stable feature directions regardless of environmental noise, justifying our strategy to isolate them as a reliable anchor for Agent V.
    
    \item \textcolor{red}{\textbf{Red Text (The Negative Probe / CIS):}} Content unique to the perturbation, specifically the description of the ``skier,'' is successfully captured in the second caption. This corresponds to the \textbf{Counter-commonsense Interference Stream (CIS)}. It highlights the \textbf{Sensitivity Disparity}: unlike the hallucinated content, Agent V retains acute sensitivity to high-conflict visual signals. This confirms that the CIS functions correctly as a ``Negative Probe'' to differentiate visual facts from linguistic priors.
    
    \item \textcolor{brown}{\textbf{Brown Text (Hallucination / Narrative Inertia):}} In the original image, the model hallucinates context-compatible objects (e.g., ``bicycles'') driven by priors. Crucially, under the CIS perturbation, these hallucinations do not persist but either vanish or shift (e.g., replaced by ``a pool''). This \textbf{context-dependent volatility} strongly corroborates our core premise: hallucinations are artifacts of a fragile \textbf{False Equilibrium} dominated by Narrative Inertia ($\mathbf{n}_{bias}$). The fact that these hallucinations are easily disrupted by the CIS empirically validates the effectiveness of ACE in destabilizing linguistic priors and restoring a \textbf{Factual Equilibrium}.
\end{itemize}

\begin{figure*}[t]
  \vskip -0.1in
  \begin{center}
    % width=\textwidth 适配双栏总宽度（替代原单栏的 \columnwidth）
    % \centerline{\includegraphics[width=\textwidth]{icml_numpapers}}
    \includegraphics[width=1\textwidth]{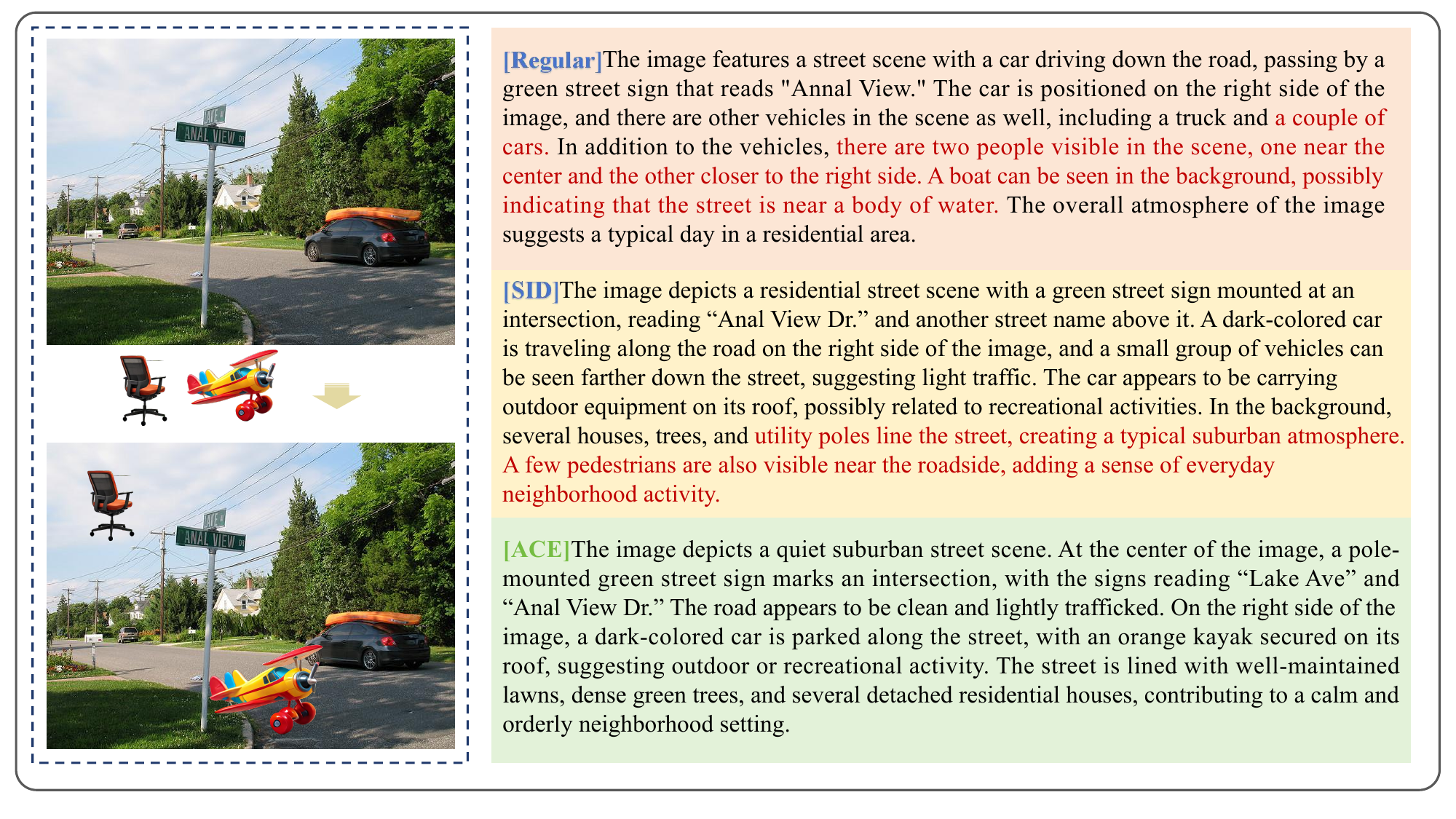} 
    \caption{\textbf{Qualitative comparison of hallucination mitigation strategies.} 
The left column displays the original input $I_{raw}$ and the counter-commonsense perturbed image $I_{cf}$ (with overlaid objects) used for inference. The right column contrasts the generated descriptions:
\textbf{[Regular]} Succumbing to narrative inertia, the baseline model fabricates context-consistent but non-existent objects (highlighted in \textcolor{red}{red}), such as ``a boat'' and ``people.''
\textbf{[SID]} While existing intervention methods mitigate the primary hallucination (boat), the lack of a stable positive anchor leads to \textbf{secondary hallucinations} (e.g. ``pedestrians''), as the model drifts into other probable but incorrect paths.
\textbf{[ACE]} By synergizing the CIS (to penalize priors) with the DVS (Positive Anchor), our method not only eliminates hallucinations but also accurately grounds \textbf{fine-grained details} (e.g., ``orange kayak,'' ``Lake Ave''), successfully restoring a robust \textbf{Factual Equilibrium}.}
\label{fig:qualitative_results}
    \label{figapp2}
  \end{center}
  \vskip -0.2in % 可选：微调图片与下文的间距
\end{figure*}

\paragraph{Qualitative Results.}
Figure~\ref{figapp2} presents the qualitative visualization results of different decoding methods using the LLaVA-NeXT model. We observe that \textbf{Regular} (Greedy) decoding, driven by unconstrained narrative inertia, frequently generates severe object hallucinations (e.g., fabricating non-existent objects like ``boats'' or ``people''). Notably, while existing intervention methods such as \textbf{SID} can mitigate hallucinations, their reliance on negative sampling disrupts attention stability, inadvertently introducing \textit{secondary hallucinations} (e.g., generating erroneous content like ``pedestrians''). This indicates that sole reliance on perturbation-based negative sampling—without a stable \textbf{Positive Anchor} for reference—destabilizes the decoding trajectory, causing the model to drift into new erroneous paths. In contrast, \textbf{ACE} employs counter-commonsense perturbations to identify and penalize linguistic priors, while simultaneously leveraging the Decoupled Visual Stream (DVS) to guarantee the precision of visual information. Consequently, ACE effectively suppresses hallucinations while accurately grounding fine-grained details (e.g., identifying the ``orange kayak'' and the specific sign text ``Lake Ave''), achieving a robust restoration of the \textbf{Factual Equilibrium}.

\begin{figure*}[t]
  \vskip -0.1in
  \begin{center}
    % width=\textwidth 适配双栏总宽度（替代原单栏的 \columnwidth）
    % \centerline{\includegraphics[width=\textwidth]{icml_numpapers}}
    \includegraphics[width=1\textwidth]{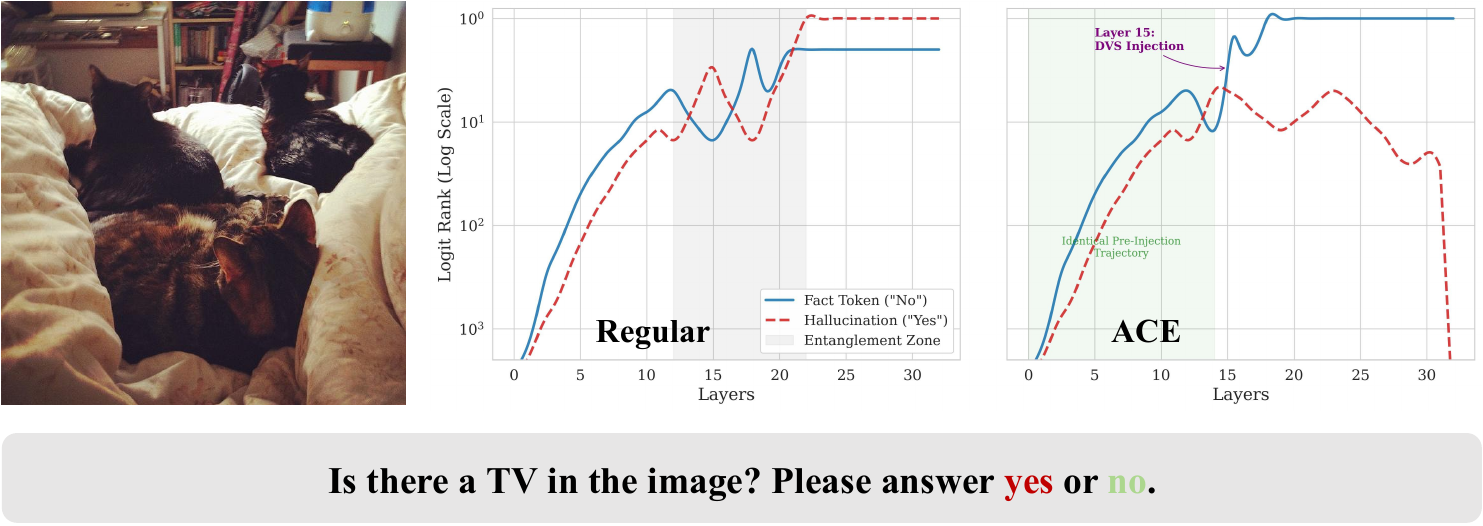} 
    \caption{\textbf{Visualizing the Game-Theoretic Dynamics on LLaVA-NeXT.} 
    The plots track the Logit Rank evolution (a proxy for Agent Utility) of the Factual Token (Agent V) vs. Hallucinated Token (Agent L).
    \textbf{Regular Decoding:} The gray \textit{Entanglement Zone} highlights the \textbf{Game of Attrition}, where Agent L eventually exploits narrative inertia to overpower Agent V, leading to a False Equilibrium.
    \textbf{ACE Decoding:} 
    \textbf{(1) Control Validation:} Identical pre-injection trajectories ensure a fair comparison.
    \textbf{(2) Strategic Reinforcement:} Upon DVS injection (Layer 15), Agent V gains dominance, suppressing Agent L's rise.
    \textbf{(3) Equilibrium Shift:} At the final layer (Layer 32), the CIS penalty acts as a hard constraint, causing Agent L's utility to collapse vertically, thereby enforcing a robust Factual Equilibrium.}
\label{figapp3}
  \end{center}
  \vskip -0.2in % 可选：微调图片与下文的间距
\end{figure*}

% \section{Detailed Analysis of Internal Game Dynamics}
\label{app:logit_dynamics}

To validate the game-theoretic formulation proposed in our framework, we conducted a \textbf{Layer-wise Logit Dynamics Analysis} using the \textbf{LLaVA-NeXT} model. This experiment allows us to microscopically visualize the non-cooperative competition between \textbf{Agent V} (advocating for the factual token ``No'') and \textbf{Agent L} (advocating for the hallucinated token ``Yes'') across the decoding lifecycle.

\paragraph{Experimental Rigor and Setup.}
% We utilized the sample from Figure~\ref{fig:qualitative_results} with fixed random seeds to ensure a fair game environment.
% \begin{itemize}
\textbf{Control Phase (Layers 0--14):} As illustrated in Figure~\ref{figapp3} (\textbf{ACE Decoding}), the trajectories overlap perfectly with the Baseline in the first 14 layers. This strict alignment confirms that the initial game state is identical, validating that subsequent deviations result solely from our strategic interventions.
\textbf{Intervention Phase (Layers 15--32):} We introduced the DVS vector at Layer 15 ($\alpha=0.3$) to boost Agent V's utility, and applied the CIS penalty at Layer 32 to sanction Agent L.
% \end{itemize}

\paragraph{Regular Decoding: Game of Attrition leading to False Equilibrium.}
The \textbf{Regular Decoding} plot reveals the pathology of standard inference. The ``Entanglement Zone'' represents a fierce \textbf{Zero-Sum Game}: Agent V and Agent L struggle for dominance, exchanging the lead multiple times. However, Agent L possesses an inherent advantage due to cumulative \textbf{Narrative Inertia}. Without external intervention, Agent V is gradually depleted, allowing Agent L to monopolize the probability mass at the final layer. This collapse represents the system settling into a \textbf{False Equilibrium}, where $U_L > U_V$.

\paragraph{ACE Decoding: Strategic Payoff Reshaping.}
The \textbf{ACE Decoding} plot visualizes how we break this deadlock through a two-stage strategic intervention:
\begin{enumerate}
    \item \textbf{Reinforcing Agent V via DVS (Layers 15--31):} Immediately following the injection at Layer 15, we observe a \textbf{Strategic Divergence}. By injecting rigid visual signals, we fundamentally reshape the internal state (VSB), effectively increasing the ``bargaining power'' of Agent V. Consequently, Agent V stabilizes at the top rank, while Agent L, despite its inertia, is structurally suppressed to a sub-optimal position (Rank 5--15).
    
    \item \textbf{Forced Equilibrium Shift via CIS (Layer 32):} At the final decision stage, the CIS module acts as an \textbf{Adversarial Sanction}. Detecting Agent L's insensitivity to perturbations (a sign of blind guessing), ACE imposes a severe utility penalty ($-\lambda_{dyn}$). This triggers a \textbf{Sharp Vertical Drop} in Agent L's rank (from $\sim$10 to $>1000$), forcing the system to shift instantaneously from a potential False Equilibrium to a verifiable \textbf{Factual Equilibrium} ($U_V \gg U_L$).
\end{enumerate}

\section{Additional Experiments}

\subsection{Generalizability of Ablation Study across Multiple MLLMs}
To verify the model-agnostic effectiveness of each component in the ACE framework, we extend the ablation study from LLaVA-1.5 to three additional MLLMs: \textbf{Shikra}, \textbf{InstructBLIP}, and \textbf{LLaVA-NeXT}. 

As summarized in Table~\ref{tab:app_ablation_extended}, the experimental results consistently reinforce several key findings:
\begin{itemize}
    \item \textbf{Synergy of CIS and DVS:} Across all architectures, the absence of the Positive Anchor (\textit{w/o DVS}) typically leads to a drop in F1-score, as the model becomes overly conservative. Conversely, removing the Negative Probe (\textit{w/o CIS}) results in higher CHAIR scores, confirming that linguistic inertia cannot be fully disrupted by reward mechanisms alone.
    \item \textbf{Criticality of Mid-Layer Intervention:} For more advanced models like LLaVA-NeXT and Shikra, the performance gap between ``Late Only'' and ``Full ACE'' is particularly pronounced. This further validates our VSB theory—visual signals are structurally silenced in the intermediate layers of deep Transformers, and logit-level adjustment at the final layer is insufficient to recover the lost factual grounding.
    \item \textbf{Performance Consistency:} ACE (Full) achieves the best trade-off between accuracy and hallucination suppression across all tested models, demonstrating its robustness as a plug-and-play framework.
\end{itemize}

\begin{table*}[h]
\centering
\caption{\textbf{Comprehensive ablation study across three MLLMs on POPE (Adversarial) and CHAIR benchmarks.} This table illustrates the consistent contribution of CIS, DVS, and Mid-Layer Rectification. The best results for each model are highlighted in \textbf{bold}.}
\label{tab:app_ablation_extended}
\resizebox{\linewidth}{!}{
\begin{tabular}{l | c c c | c c c c | c c c c | c c c c}
\toprule
\multirow{2}{*}{\textbf{Method}} & \multicolumn{3}{c|}{\textbf{Components}} & \multicolumn{4}{c|}{\textbf{Shikra}} & \multicolumn{4}{c|}{\textbf{InstructBLIP}} & \multicolumn{4}{c}{\textbf{LLaVA-NeXT}} \\
 & \textbf{CIS} & \textbf{DVS} & \textbf{Mid} & \textbf{Acc}$\uparrow$ & \textbf{F1}$\uparrow$ & \textbf{C$_S$}$\downarrow$ & \textbf{C$_I$}$\downarrow$ & \textbf{Acc}$\uparrow$ & \textbf{F1}$\uparrow$ & \textbf{C$_S$}$\downarrow$ & \textbf{C$_I$}$\downarrow$ & \textbf{Acc}$\uparrow$ & \textbf{F1}$\uparrow$ & \textbf{C$_S$}$\downarrow$ & \textbf{C$_I$}$\downarrow$ \\
\midrule
Baseline & \ding{55} & \ding{55} & \ding{55} & 78.25 & 79.42 & 46.8 & 13.8 & 75.03 & 76.84 & 50.9 & 13.2 & 80.12 & 80.53 & 40.7 & 12.1 \\
\midrule
w/o DVS & \ding{51} & \ding{55} & \ding{55} & 78.80 & 78.55 & 44.5 & 12.5 & 76.55 & 76.20 & 48.1 & 12.6 & 82.35 & 80.15 & 38.5 & 11.5 \\
w/o CIS & \ding{55} & \ding{51} & \ding{51} & 79.12 & 79.85 & 43.1 & 12.1 & 77.80 & 79.55 & 45.3 & 11.9 & 84.10 & 82.75 & 37.2 & 11.2 \\
w/o Mid & \ding{51} & \ding{51} & \ding{55} & 79.25 & 79.70 & 42.0 & 11.8 & 78.15 & 79.30 & 43.6 & 11.4 & 84.95 & 82.40 & 36.4 & 10.9 \\
\midrule
\rowcolor[HTML]{EFEFEF}
\textbf{ACE(Full)} & \ding{51} & \ding{51} & \ding{51} & \textbf{79.54} & \textbf{79.98} & \textbf{41.1} & \textbf{11.4} & \textbf{78.39} & \textbf{79.64} & \textbf{41.7} & \textbf{10.8} & \textbf{85.68} & \textbf{82.89} & \textbf{35.4} & \textbf{10.6} \\
\bottomrule
\end{tabular}
}
\end{table*}

\subsection{GPT-4V Assisted Open-Ended Evaluation}
To evaluate the generation quality in unconstrained open-ended scenarios, we employ GPT-4V for a comprehensive dual-aspect assessment. While traditional metrics like CHAIR provide valuable insights into object-level hallucinations, they primarily rely on keyword-matching within a limited taxonomy (e.g., MSCOCO objects). Such metrics often fail to capture complex attribute misidentifications (colors, relations) and cannot authoritatively reflect the narrative coherence or syntactic richness of the generated descriptions. To bridge this gap, we adopt a GPT-4V-based evaluation protocol to simulate human-like perception of factual grounding and descriptive quality.

\begin{table*}[t]
\centering
\caption{\textbf{GPT-4V assisted hallucination evaluations on open-ended generation.} Following \cite{huang2024opera, ICLR2025_3cc87f2b}, we utilize GPT-4V as a holistic judge to score Correctness (\textbf{C}) and Detailedness (\textbf{D}) on a scale of 0--10. Compared to conventional keyword-matching metrics (e.g., CHAIR), this evaluation better reflects the factual grounding and narrative quality from a human-perceptual perspective. ACE consistently achieves superior correctness without sacrificing descriptive richness.}
\label{tab:gpt4v_eval}
\resizebox{0.8\linewidth}{!}{
\begin{tabular}{l c c p{0.5mm} c c p{0.5mm} c c p{0.5mm} c c}
\toprule
\multirow{2}{*}{\textbf{Setting}} & \multicolumn{2}{c}{\textbf{LLaVA-1.5}} & & \multicolumn{2}{c}{\textbf{InstructBLIP}} & & \multicolumn{2}{c}{\textbf{Shikra}} & & \multicolumn{2}{c}{\textbf{LLaVA-NeXT}} \\
\cmidrule{2-3} \cmidrule{5-6} \cmidrule{8-9} \cmidrule{11-12}
& \textbf{C} $\uparrow$ & \textbf{D} $\uparrow$ & & \textbf{C} $\uparrow$ & \textbf{D} $\uparrow$ & & \textbf{C} $\uparrow$ & \textbf{D} $\uparrow$ & & \textbf{C} $\uparrow$ & \textbf{D} $\uparrow$ \\
\midrule
Greedy & 5.12 & 5.84 & & 4.68 & 5.15 & & 4.97 & 5.22 & & 5.39 & 5.61 \\
\textbf{ACE (Ours)} & \textbf{5.93} & \textbf{6.08} & & \textbf{5.57} & \textbf{5.24} & & \textbf{5.82} & \textbf{5.31} & & \textbf{6.51} & \textbf{5.89} \\
\midrule
VCD & 5.51 & 5.58 & & 5.04 & 5.26 & & 5.26 & 5.31 & & 5.88 & 5.53 \\
\textbf{ACE (Ours)} & \textbf{6.12} & \textbf{5.97} & & \textbf{5.41} & \textbf{5.49} & & \textbf{5.68} & \textbf{5.44} & & \textbf{6.19} & \textbf{5.84} \\
\midrule
OPERA & 6.18 & 5.52 & & 5.22 & 4.91 & & 5.39 & 4.82 & & 6.07 & 5.29 \\
\textbf{ACE (Ours)} & 6.13 & \textbf{5.99} & & \textbf{5.71} & \textbf{5.38} & & \textbf{5.91} & \textbf{5.92} & & \textbf{6.58} & \textbf{6.27} \\
\midrule
SID & 5.86 & 5.68 & & 5.35 & 5.09 & & 5.41 & 5.38 & & 6.17 & 5.85 \\
\rowcolor[HTML]{EFEFEF}
\textbf{ACE (Ours)} & \textbf{6.25} & \textbf{6.02} & & \textbf{5.84} & \textbf{5.51} & & \textbf{6.07} & \textbf{5.96} & & \textbf{6.75} & \textbf{6.32} \\
\bottomrule
\end{tabular}
}
\end{table*}

Evaluation Protocol. Following the methodology in \cite{huang2024opera, ICLR2025_3cc87f2b}, we randomly sample 500 images from the MSCOCO dataset. For each image, we prompt the MLLMs with: \textit{``Please describe this image in detail''} with a maximum length of 512 tokens. To ensure a fair and unbiased comparison, GPT-4V acts as a blind judge. It is presented with the original image and a pair of anonymized responses, one from a baseline method and one from ACE.

The sequence of responses is shuffled to eliminate any potential position bias. GPT-4V scores each response on a scale of 0 to 10 based on two criteria: \begin{itemize} \item \textbf{Correctness:} Evaluates the factual consistency with the image. Hallucinated attributes, such as wrong counts, colors, or spatial relations, significantly penalize this score. \item \textbf{Detailedness:} Assesses the richness and granularity of the description, excluding any illusory details. \end{itemize}

Results Analysis. As summarized in Table~\ref{tab:gpt4v_eval}, our ACE framework consistently outperforms competitive baselines across all four MLLMs. Specifically, compared to standard Sampling, ACE improves the Correctness (C) score by a significant margin (e.g., 5.39 $\rightarrow$ 6.51 on LLaVA-NeXT) while simultaneously enhancing the Detailedness (D) level. When compared against advanced methods like VCD and OPERA, ACE demonstrates a superior ability to suppress hallucinations without incurring a "verbosity penalty."Notably, while OPERA occasionally achieves high correctness (e.g., 6.18 on LLaVA-1.5), its detailedness often lags behind (e.g., 5.52 vs. 5.99 for ACE), indicating a tendency toward over-conservative, truncated descriptions.In contrast, ACE maintains high detailedness scores across all settings, validating that our adversarial equilibrium effectively resurrects structurally silenced visual facts rather than simply pruning text. Furthermore, ACE consistently surpasses its predecessor, SID, across all metrics, most notably on the deeper LLaVA-NeXT architecture (6.75 C and 6.32 D), confirming that mid-layer rectification is essential for maintaining factual grounding in large-scale models. From a human-perceptual perspective, these results underscore ACE's robustness in generating both trustworthy and informative descriptions.
%%%%%%%%%%%%%%%%%%%%%%%%%%%%%%%%%%%%%%%%%%%%%%%%%%%%%%%%%%%%%%%%%%%%%%%%%%%%%%%
%%%%%%%%%%%%%%%%%%%%%%%%%%%%%%%%%%%%%%%%%%%%%%%%%%%%%%%%%%%%%%%%%%%%%%%%%%%%%%%

\end{document}